\documentclass[lettersize,journal]{IEEEtran}
\usepackage{amsmath,amsfonts}
\usepackage{algorithmic}
\usepackage{algorithm}
\usepackage{array}
\usepackage[caption=false,font=normalsize,labelfont=sf,textfont=sf]{subfig}
\usepackage{textcomp}
\usepackage{stfloats}
\usepackage{url}
\usepackage{verbatim}
\usepackage{graphicx}
\usepackage{cite}
\usepackage{pifont}
\usepackage[colorlinks=true, linkcolor=black, citecolor=black, urlcolor=blue]{hyperref}
\usepackage{multirow}%
\usepackage{amssymb}%
\usepackage{amsthm}%
\usepackage{mathrsfs}%
\usepackage{xcolor}%
\usepackage{manyfoot}%
\usepackage{booktabs}%
\usepackage{listings}%
\usepackage{adjustbox}
\usepackage{tikz}
\usepackage{mathabx}
\usepackage{anyfontsize}
\usetikzlibrary{spy}
\usepackage{subcaption}
\newtheorem{theo}{Theorem}[section]

\theoremstyle{definition}

\theoremstyle{remark}
\newtheorem{remark}[theo]{Remark}

\def\div{\text{div}}

\newcommand{\cmark}{\ding{51}} 
\newcommand{\xmark}{\ding{55}} 

\newcommand{\Rb}{\ensuremath{\mathbb{R}}}
\def\prox{\text{prox}}
\def\div{\text{div}}

\newlength{\spyimagewidth}
\newlength{\spyimageheight}

\newcommand{\spyimageStagestest}[2][0.16\linewidth]{%
  \setlength{\spyimagewidth}{#1}
  \setlength{\spyimageheight}{\spyimagewidth}
  \begin{tikzpicture}
    \node[inner sep=0pt, anchor=south west] (image) at (0,0) {%
      \adjustbox{trim={.33\width} {0.23\height} {0.1\width} {0.2\height},clip,width=\spyimagewidth}{%
        \includegraphics{#2}%
      }%
    };
  \end{tikzpicture}%
}

\newcommand{\spyimageto}[2][0.43\linewidth]{%
  \setlength{\spyimagewidth}{#1}
  \setlength{\spyimageheight}{\spyimagewidth}
  \begin{tikzpicture}[spy using outlines={rectangle, magnification=1.8, size=0.4\spyimagewidth, connect spies,color=white}]
    \node[inner sep=0pt, anchor=south west] (image) at (0,0) {\adjustbox{trim={.1\width} {0.1\height} {0.2\width} {0.2\height},clip,width=\spyimagewidth}{\includegraphics{#2}}};
   
  \end{tikzpicture}%
}

\newcommand{\spyimagegradfidel}[2][0.43\linewidth]{%
  \setlength{\spyimagewidth}{#1}
  \setlength{\spyimageheight}{\spyimagewidth}
  \begin{tikzpicture}[spy using outlines={rectangle, magnification=2.7, size=0.4\spyimagewidth, connect spies,color=white}]
    \node[inner sep=0pt, anchor=south west] (image) at (0,0) {\adjustbox{trim={.1\width} {0.1\height} {0.1\width} {0.1\height},clip,width=\spyimagewidth}{\includegraphics{#2}}};
    \coordinate (spy point) at (0.42\spyimagewidth , 0.15\spyimagewidth );
       \coordinate (spy node) at (0.2\spyimagewidth, 0.5\spyimageheight);
    \spy on (spy point) in node at (spy node);
  \end{tikzpicture}%
}

\newcommand{\spyimageRetinex}[2][0.43\linewidth]{%
  \setlength{\spyimagewidth}{#1}
  \setlength{\spyimageheight}{\spyimagewidth}
  \begin{tikzpicture}[spy using outlines={rectangle, magnification=2.7, size=0.4\spyimagewidth, connect spies,color=white}]
    \node[inner sep=0pt, anchor=south west] (image) at (0,0) {\adjustbox{trim={.1\width} {0.1\height} {0.1\width} {0.1\height},clip,width=\spyimagewidth}{\includegraphics{#2}}};
  \end{tikzpicture}%
}

\newcommand{\spyimageMHA}[2][0.414\linewidth]{%
  \setlength{\spyimagewidth}{#1}
  \setlength{\spyimageheight}{\spyimagewidth}
  \begin{tikzpicture}[spy using outlines={rectangle, magnification=2.7, size=0.4\spyimagewidth, connect spies,color=white}]
    \node[inner sep=0pt, anchor=south west] (image) at (0,0) {\adjustbox{trim={.\width} {0.\height} {0.\width} {0.\height},clip,width=\spyimagewidth}{\includegraphics{#2}}};
  \end{tikzpicture}%
}

\newcommand{\spyimageUIEB}[2][0.2165\linewidth]{%
  \setlength{\spyimagewidth}{#1}
  \setlength{\spyimageheight}{\spyimagewidth}
  \begin{tikzpicture}[spy using outlines={rectangle, magnification=1.9, size=0.5\spyimagewidth, connect spies,color=white}]
    \node[inner sep=0pt, anchor=south west] (image) at (0,0) {\adjustbox{trim={0.3\width} {0.1\height} {0.\width} {0.2\height},clip,width=\spyimagewidth}{\includegraphics{Results/UIEB/3/#2}}};
    \coordinate (spy point) at (0.85\spyimagewidth , 0.42\spyimagewidth );
       \coordinate (spy node) at (0.3\spyimagewidth, 0.75\spyimageheight);
    \spy on (spy point) in node at (spy node);
  \end{tikzpicture}%
}

\newcommand{\spyimageSUIME}[2][0.2165\linewidth]{%
  \setlength{\spyimagewidth}{#1}
  \setlength{\spyimageheight}{\spyimagewidth}
  \begin{tikzpicture}[spy using outlines={rectangle, magnification=4, size=0.4\spyimagewidth, connect spies,color=white}]
    \node[inner sep=0pt, anchor=south west] (image) at (0,0) {\adjustbox{trim={.2\width} {0.\height} {0.\width} {0.2\height},clip,width=\spyimagewidth}{\includegraphics{Results/SUIME/10/#2}}};
    \coordinate (spy point) at (0.9\spyimagewidth , 0.06\spyimagewidth );
       \coordinate (spy node) at (0.85\spyimagewidth, 0.7\spyimageheight);
    \spy on (spy point) in node at (spy node);
  \end{tikzpicture}%
}

\begin{document}

\title{Variational Deep Unfolding with Mamba-Based Nonlocal Modeling for Underwater Image Enhancement}

\author{Daniel Torres, Julia Navarro, Catalina Sbert, and Joan Duran
        \thanks{The authors are with Institute of Applied Computing and Community Code (IAC3) and with the Dept.~of Mathematics and Computer Science, Universitat de les Illes Balears, Cra.~de Valldemossa km.~7.5, E-07122 Palma, Spain.}
\thanks{This work is part of the MoMaLIP project PID2021-125711OB-I00 funded by MICIU/AEI/10.13039/501100011033 and the European Union NextGeneration EU/PRTR. Daniel Torres is also supported by the Conselleria d’Educació i Universitats del GOIB through grant FPU2024-002-C. The authors gratefully acknowledge the computer resources at Artemisa, funded by the EU ERDF and Comunitat Valenciana and the technical help from IFIC (CSIC-UV).}
}

\markboth{}%
{Daniel \MakeLowercase{\textit{et al.}}: Variational Deep Unfolding with Mamba-Based Nonlocal Modeling for Underwater Image Enhancement}


\maketitle

\begin{abstract}
Underwater imaging plays a crucial role in ocean engineering, although captured data often suffer from poor visibility and color distortion. To address these challenges, we propose a model-based deep unfolding network for underwater image enhancement that integrates variational modeling into a learnable architecture. The framework is guided by a variational formulation based on a dehazing decomposition, incorporating a multiplicative residual component to absorb remaining artifacts and a nonlocal gradient-type constraint to preserve structural details and enhance edge sharpness. We provide a theoretical analysis establishing the existence of solution for the associated minimization problem. The proposed unfolding method incorporates Mamba layers to efficiently capture self-similarities in the scene. In addition, we introduce a proximal trajectory loss that enforces consistency between the unfolding stages and the iterations of an ideal restoration regularizer. Experimental results demonstrate that the proposed unfolding approach achieves improved visual quality and competitive quantitative performance compared with recent state-of-the-art methods. The source
code will be available at \href{https://github.com/MIA-UIB/Variational-Unfolding-Mamba-Underwater-Enhancement}
     {this repository}.
\end{abstract}

\begin{IEEEkeywords}
Underwater image enhancement, dehazing, nonlocal variational methods, unfolding networks, state space models.
\end{IEEEkeywords}

\section{Introduction}\label{sec1}

\IEEEPARstart{T}{he} acquisition of clear underwater images is of major importance in ocean engineering and marine research, where different devices, such as fixed cameras or underwater vehicles, are widely used to explore marine environments. However, raw underwater images usually suffer from limited visibility, non-uniform illumination, color attenuation, and noise, which significantly degrade visual quality and affect subsequent vision tasks \cite{raveendran2021underwater}.

Classical underwater image enhancement approaches can be broadly divided into model-free and physical model-based methods. Model-free techniques \cite{iqbal2010enhancing, ancuti2012enhancing, ancuti2017} directly manipulate pixel intensities to improve contrast, brightness, or color balance without relying on physical assumptions. Model-based methods formulate the restoration process as an inverse problem governed by the underwater image formation model. Although several authors have proposed approaches relying on the Retinex theory \cite{Retinex}, the most typical assumption is to model underwater images as hazy observations. According to the Jaffe-McGlamery model \cite{McGlamery80, Jaffe90}, a hazy image consists of a linear superposition of the direct, backscattering, and forward scattering components. Within the variational framework, the solution of the inverse problem is obtained by minimizing an energy functional that comprises data-fidelity and regularization terms \cite{hou2019efficient, hou2020novel, xie2022variational, li2025dual}. However, the effectiveness of these methods strongly depends on the choice of suitable priors.

In contrast, deep learning techniques \cite{li2017watergan, li2019underwater, li2021underwater} overcome this limitation by learning priors directly from data, although they often rely on highly complex architectures and large training datasets. To bridge the gap between model-based optimization and deep learning, unfolding networks \cite{pham2023deep, xu2025degradation} map the iterations of classical optimization algorithms into neural architectures by replacing specific operators, such as proximal mappings in first-order minimization schemes \cite{chambolle2016introduction}, with learnable modules. This results in more interpretable and computationally efficient models.

In this work, we propose a model-based deep unfolding framework for underwater image enhancement. The method is guided by a variational formulation based on a dehazing decomposition, incorporating multiplicative residual noise and a nonlocal gradient-type constraint. The proximal operators are replaced by learnable neural networks, while Mamba mechanisms \cite{gu2024mamba} are integrated to efficiently capture self-similarities. In addition, we introduce a proximal trajectory loss that enforces consistency between the unfolding stages and the iterations of an ideal restoration regularizer. Specifically, our main contributions are summarized below:
\begin{itemize}
\item A variational formulation for underwater image enhancement based on a dehazing decomposition, integrating a multiplicative residual component  to absorb remaining artifacts and a nonlocal gradient-type constraint to preserve structural details and enhance edge sharpness.
\item A theoretical analysis proving the existence of minimizers for the proposed variational problem.
\item A model-based deep unfolding network integrating Mamba layers to efficiently capture long-range dependencies in underwater scenes.
\item A proximal trajectory loss specifically designed for the underwater imaging setting to enforce consistency between the unfolding stages and the iterations of the corresponding optimization algorithm.
\end{itemize}

The rest of the paper is organized as follows. Section 2 reviews the state of the art in underwater image enhancement. Section 3 presents the variational baseline of our method, while Section 4 describes the associated unfolding network. Experimental comparisons on the UIEB \cite{li2019underwater} and SUIM-E \cite{qi2022sguie} datasets are reported in Section 5, followed by an ablation study in Section 6. Conclusions are drawn in Section 7.

\section{State of the Art}

\subsection{Classical methods}

Model-free techniques aim to improve underwater image quality by directly adjusting pixel values without relying on a physical image formation model.
One of the earliest approaches to address color distortion was proposed by Iqbal et al.~\cite{iqbal2010enhancing}. Their method compensates for dominant color casts by equalizing individual channels, followed by histogram stretching to enhance contrast in both RGB and HSI color spaces.
Hitam et al.~\cite{hitam2013mixture} enhance images through Contrast Limited Adaptive Histogram Equalization (CLAHE) in both RGB and HSV spaces. Then, the resulting images are combined to improve contrast while mitigating noise amplification.

Ancuti et al.~\cite{ancuti2012enhancing} proposed a fusion-based framework in which several enhanced versions of the input image are generated and combined through multi-scale fusion weights.
The authors later extended this approach in \cite{ancuti2017}, where color balance and multi-scale fusion are more systematically combined to improve robustness across diverse underwater conditions.

Fusion-based approaches have also continued to evolve during recent years. An et al.~\cite{an2024hfm} proposed fusing underwater images after visibility recovery and contrast enhancement, considering both pixel intensity and global gradient changes. Zhang et al.~\cite{zhang2022underwater} proposed a method based on a minimal color loss principle and locally adaptive contrast enhancement. The same authors also generate global and local  contrast-enhanced images through separate strategies and fuse their multi-scale frequency components using a wavelet-based method \cite{zhang2024underwater}. 

Classical model-based methods rely on explicit physical descriptions of the degradation process. A common assumption interprets underwater images as hazy observations following the Jaffe-McGlamery model \cite{McGlamery80, Jaffe90}, where the observed image is expressed as a linear superposition of direct transmission, background scattering, and forward scattering components:
\begin{equation*}E_T= E_d + E_b + E_f,\end{equation*}
where $E_T$ is the total irradiance that reaches the camera. The direct transmission component describes the attenuated scene radiance and is commonly written as $E_d=Jt$, where $J$ denotes the restored image and $t$ is the transmission map. The background scattering term is typically represented as $E_b=A(1-t)$, where $A$ corresponds to the global backscattered light. The forward scattering component $E_f$ models the spatial blur caused by light diffusion in the water medium and is usually represented through a convolution with a distance-dependent point spread function. However, it is often ignored \cite{ancuti2017} and the image observation model can be simplified as
\begin{equation}\label{model_haze}
I= Jt+A(1-t),
\end{equation}
where $I$ denotes the observed degraded image.





Different algorithms based on \eqref{model_haze} have been proposed \cite{chiang2011underwater, serikawa2014underwater, li2016underwater}, where classical enhancement operations such as filtering, histogram manipulation, or contrast correction are not directly applied to the observed image, but instead to previously estimated components such as $t$ or $A$. Nevertheless, the most common strategy in this context consists of formulating the dehazing problem within a variational framework.

Hou et al.~\cite{hou2019efficient} propose a nonlocal prior to estimate $J$, but they omit the transmission map in the variational formulation. They extend this approach by incorporating total variation priors for both $J$ and $t$ \cite{hou2020novel}. Xie et al.~\cite{xie2022variational} consider the complete underwater image formation model to jointly remove haze and blur using a forward scattering term and a normalized total variation prior. More recently, higher-order regularization strategies have been explored, combining first- and second-order differential operators \cite{li2025dual}.

Another relevant vision model is Retinex \cite{Retinex}, which describes the ability of the human visual system to discount illumination variations when observing a scene under different lighting conditions. Galdran et al.~\cite{dehaze-retinex} theoretically showed that applying Retinex to inverted intensities provides a solution to the dehazing problem. This relation can be expressed as
\begin{equation}
    \label{retinex_dehazing}
\text{dehaze} (I)=1-\text{Retinex}(1-I).
\end{equation}
Inspired by this connection, some authors propose enhancement methods based on the Retinex theory, either through variational formulations \cite{fu2014retinex,  zhuang2021bayesian, zhuang2022underwater} or through multi-scale Retinex techniques \cite{zhang2017underwater, zhou2022multi}. 

\subsection{Deep learning methods}

The success of deep learning techniques has significantly influenced underwater image enhancement, leading to a wide variety of data-driven approaches that depart from handcrafted priors. Early methods addressed the scarcity of paired data using adversarial learning \cite{li2017watergan, fabbri2018enhancing}.
With the release of paired datasets, supervised CNN-based methods became dominant. In particular, Li et al.~\cite{li2019underwater} introduced the widely used UIEB dataset and proposed a CNN architecture based on the adaptive fusion of multiple pre-enhanced inputs.

Several supervised networks followed, aiming to improve color fidelity and contrast while maintaining relatively simple structures. UIEC$^2$-Net \cite{wang2021uiec} introduces a dual color-space strategy that jointly exploits RGB and HSV representations to better handle color and luminance distortions.  RAUNENet \cite{peng2023raune} proposes a residual architecture containing an attention module that combines channel and spatial attention. Shallow-UWNet \cite{naik2021shallow} and LiteEnhanceNet~\cite{zhang2024liteenhancenet}  focus on computational efficiency by proposing lightweight CNN architectures.

Beyond purely data-driven CNNs, several works attempt to incorporate elements of underwater image formation models into deep architectures. UColor~\cite{li2021underwater}  learns representative features from a multi-color space using channel attention and introduces a medium transmission as a spatial attention map.  P2CNet~\cite{rao2023deep} departs from direct enhancement by modeling color compensation through a probabilistic framework.

Transformers have shown strong performance in computer vision due to their ability to capture long-range dependencies and global context. For underwater image enhancement, methods such as AutoEnhancer \cite{tang2022autoenhancer} and Phaseformer \cite{khan2025phaseformer} adopt a U-Net–Transformer hybrid design and frequency-domain attention, respectively. However, self-attention incurs high computational and memory costs. Alternative approaches, like UIR-PolyKernel \cite{guo2025underwater}, employ large-kernel CNNs to efficiently capture long-range dependencies without explicit attention.

State Space Models (SSMs) have emerged as an effective alternative for modeling global context with linear complexity. The Mamba architecture \cite{gu2024mamba} builds on selective state space mechanisms, enabling efficient long-range modeling. Variants applied to underwater enhancement include UW-Mamba \cite{an2024uwmamba}, which combines visual state space blocks with convolutions for local perception, and Mamba-UIE \cite{zhang2024mamba}, which integrates the underwater formation model as a physical loss function.

Unfolding techniques provide a powerful framework to incorporate model information into deep networks by mapping the iterations of energy-based optimization algorithms into neural architectures. In underwater enhancement, several works unfold general degradation models of the form \begin{equation*}\frac{1}{2}\|I-DJ\|_2^2+\lambda \Phi(J),\end{equation*}
 where $\Phi(J)$
 is a learnable regularizer. For instance, \cite{xu2025degradation} assumes  $D$
as the identity, while Lei et al.~\cite{lei2024uie} learn the operator during training. In contrast, Pham et al.~\cite{pham2023deep} unfold the degradation model \eqref{model_haze}, replacing the proximal operators associated with $J$, $t$ and $A$ with dense residual networks.



\section{Variational Baseline}

We develop a variational formulation for underwater image enhancement based on the decomposition:
 \begin{equation}\label{model}
I= (J+N)t+A(1-t),
\end{equation}
where we propose to incorporate a residual component $N$ into the classical dehazing model \eqref{model_haze}. In this section, we consider explicit priors for the restored image, the transmission map, and the residual term, leading to a fully variational formulation. This serves as a baseline for evaluating the proposed unfolding approach, in which the priors are replaced by learnable modules. Furthermore, we establish the existence of minimizers for the associated optimization problem.

For further details on the functional analysis concepts omitted in this section, we refer the reader to \cite{ambrosio2000functions, adams2003sobolev}.

\subsection{Notations and definitions}
Let $\Omega\subset\mathbb{R}^d$, $d\geq 2$, be an open bounded domain with Lipschitz boundary and let $I: \Omega\to \Rb^C$ be the degraded image, where $C$ is the number of color channels. We denote the restored, residual, global light, and transmission components as $J, N, A:\Omega\to\Rb^C$ and $ t:\Omega\to [0,1]$, respectively. 

Let $\omega:\Omega\times\Omega\to \Rb $ be a non-negative weight function, and define the nonlocal gradient $\nabla_{\omega} J:\Omega\times\Omega\to \Rb^C$ for each channel $c\in \{1,2,\dots, C\}$ as
\begin{equation*}
\nabla_{\omega} J_c(x,y)=
\sqrt{\omega(x,y)}\left(J_c(y)-J_c(x)\right).
\end{equation*}
Given a non-negative vector weight function $\widehat{\omega}:\Omega\times\Omega\to \Rb^d$ and a vector field $V:\Omega\to \Rb^{d\times C} $, we define the nonlocal vector $(\nabla J-V)_{\widehat{\omega}}: \Omega\times\Omega\to \Rb^{d\times C}$ as 
\begin{equation*} (\nabla J_c-V_c)_{\widehat{\omega}}(x, y)=\sqrt{\widehat{\omega}(x,y)}\circ\left(\nabla J_c(x)-V_c(y)\right),\end{equation*}
where $\circ$ denotes the componentwise product.

Recall the $L^2$ norm for the different functions that appear in the proposed model:
\begin{equation*}
\begin{aligned}
&\|N\|_2^2=\sum_{c=1}^C \int_\Omega (N_c(x))^2 dx, \quad \|t\|_2^2=\int_\Omega (t(x))^2 dx,\\[1ex]
&\|\nabla_\omega J\|_2^2=\sum_{c=1}^C \int_{\Omega\times\Omega} \hspace{-0.45cm}\omega(x,y)(J_c(y)\hspace{-0.06cm}-\hspace{-0.06cm} J_c(x))^2 dx dy,\\[1ex]
&\|\hspace{-0.06cm} \left(\nabla J \hspace{-0.06cm}- \hspace{-0.06cm}V\right)_{\widehat{\omega}} \hspace{-0.06cm}\|_2^2=\hspace{-0.02cm}\sum_{c=1}^C\hspace{-0.048cm}\int_{\Omega\times\Omega} \hspace{-0.52cm}|\sqrt{\widehat{\omega}(x,y)}\hspace{-0.04cm}\circ\hspace{-0.04cm}\left(\nabla J_c(x)\hspace{-0.06cm}-\hspace{-0.06cm}V_c(y)\right)\hspace{-0.06cm}|^2 dx dy,
\end{aligned}
\end{equation*}
where $|\cdot|$ denotes the euclidean norm in $\Rb^d$. Moreover, the total variation of $t$ \cite{ambrosio2000functions} is denoted by $\text{TV}(t)$ .

\subsection{Proposed energy}
We propose to minimize the following energy functional for simultaneously recovering the haze- and noise-free image, the noise component and the transmission map: 
\begin{equation}
\label{energia}
\begin{aligned}
E(J,t,N)&=\frac{1}{2}\|(J+N)t+A(1-t)-I\|_2^2 \\
&+\frac{\alpha}{2} \|\nabla_\omega J\|_2^2  +\beta \text{TV}(t)+\frac{\lambda}{2} \|N\|_2^2\\
&+\frac{\mu}{2} \|(\nabla J-V)_{\widehat{\omega}}\|_2^2 +\frac{\rho}{2}\|t-t_0\|_2^2,
\end{aligned}
\end{equation}
over the admissible space $\Lambda$, defined for some $0<a<1$ as
\begin{equation*}\Lambda\hspace{-0.1cm}=\hspace{-0.1cm}\left\{(J,t,N)\in H^1(\Omega)\times BV(\Omega)\times L^2(\Omega) : t \in [a,1], a.e.\right\}.\end{equation*}

\begin{remark}
    In real captured images, a completely opaque atmosphere corresponding to $t\to 0$ is rare, since camera sensors still receive part of the direct scene radiance. Therefore, forcing  $t(x)$ to stay above a small threshold is consistent with practical physical limitations.
\end{remark}
The proposed energy relies on the following assumptions.  The first term corresponds to the decomposition model \eqref{model}. The $\alpha$-term promotes nonlocal regularity, which serves as a useful prior for preserving fine details and texture. For the weight $\omega$, we consider both the spatial closeness between points and the patch similarity in the degraded image $I$ \cite{Nlmeans}. The $\beta$-term enforces total variation on the transmission map, promoting spatial smoothness while preserving depth discontinuities. The $\rho$-term enforces fidelity to the initial transmission estimate, preventing excessive deviation from the initialization, whereas the $\lambda$-term constrains the absorption of residual components. 

The $\mu$-term penalizes the nonlocal distance between the gradient of the restored image and a given vector field $V$. For this vector field, we consider 
an amplified version of the gradient of the degraded input as in \cite{structure-Retinex}: 
\begin{equation}\label{eq-v} V=\left(1+\lambda_ge^{\frac{-|\nabla I|}{\sigma_g}}\right)\nabla I,\end{equation} where $\lambda_{g}>0$ controls the amplification strength and $\sigma_g>0$ determines its sensitivity to gradient magnitude. Since the objective of the nonlocal gradient-type fidelity term is to strengthen the structural information hidden in the degraded image, we compute the weight similarities $\widehat{\omega}$ using patch distances within the amplified image gradient.
This type of nonlocal gradient constraint has been shown to be particularly effective in addressing poor visibility and low-contrast degradation in low-light imaging \cite{ijcv_dani}. Given the strong similarities between low-light and underwater imaging in terms of contrast attenuation and visibility loss, we argue that this term is well suited for underwater image enhancement.

\subsection{Existence of minimizer}
The goal of this section is to prove the existence of a solution of the minimization problem
\begin{equation}\label{P}
\min_{(J,t,N)\in\Lambda} E(J,t,N).
\end{equation}
For this purpose , we assume that there exist $c_1, c_2>0$ such that $c_1\le \omega(x,y)\le c_2 $ a.e.~in $\Omega\times\Omega$ and $K_1, K_2>0$ such that $K_1\le \widehat{\omega}_k(x,y)\le K_2$ a.e.~in $\Omega\times\Omega$, $k=1,2,\dots,d$. 



 

\begin{theo}
Let $I\in L^{2}(\Omega; \mathbb{R}^C) $, $V\in L^\infty\big(\Omega;\Rb^{d\times C}\big)$, $t_0\in L^2(\Omega)$, and $A\in L^2(\Omega; \mathbb{R}^C).$ The problem (\ref{P})  admits at least one minimizer in $\Lambda$, for some $0<a<1$.
\end{theo}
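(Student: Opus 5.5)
We use the direct method of the calculus of variations. First, $E\ge 0$ on $\Lambda$ and $E\not\equiv+\infty$: the triple $(J,t,N)=(0,1,0)$ belongs to $\Lambda$. For it the fidelity term equals $\frac12\|I\|_2^2$, which is finite since $I\in L^2$. The $\mu$-term equals $\frac{\mu}{2}\sum_c\int_{\Omega\times\Omega}|\sqrt{\widehat\omega}\circ V_c(y)|^2\,dx\,dy\le \frac{\mu}{2}K_2|\Omega|\,\|V\|_2^2<\infty$, using $V\in L^\infty$ and $\Omega$ bounded. The $\rho$-term is finite since $t_0\in L^2$. Hence $m:=\inf_\Lambda E<\infty$, and we take a minimizing sequence $(J^n,t^n,N^n)\subset\Lambda$ with $E(J^n,t^n,N^n)\to m$.

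\textbf{Coercivity and bounds.} The $\lambda$-term bounds $N^n$ in $L^2$. The $\beta$-term bounds $\mathrm{TV}(t^n)$, and $a\le t^n\le 1$ bounds $t^n$ in $L^\infty$, so $t^n$ is bounded in $BV(\Omega)$.

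For $J^n$ we use the nonlocal terms together with the lower bounds on the weights. From $\omega\ge c_1$ we get
\begin{equation*}
c_1\int_{\Omega\times\Omega}(J_c(y)-J_c(x))^2\,dx\,dy=2c_1|\Omega|\,\|J_c-\bar J_c\|_2^2 ,
\end{equation*}
which controls $J^n$ minus its mean. From $\widehat\omega_k\ge K_1$ we get
\begin{equation*}
K_1\int_{\Omega\times\Omega}|\nabla J_c(x)-V_c(y)|^2\,dx\,dy\ge K_1|\Omega|\,\|\nabla J_c\|_2^2-\text{(cross term)},
\end{equation*}
and expanding the square with Young's inequality bounds $\|\nabla J^n\|_2$ using $\|V\|_2<\infty$.

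The mean $\bar J^n$ is controlled by the fidelity term. Since $t^n\ge a$, we have $\|J^n t^n\|_2\le \|(J^n+N^n)t^n+A(1-t^n)-I\|_2+\|N^n\|_2+\|A\|_2+\|I\|_2$, where we used $|N^n t^n|\le|N^n|$ and $|1-t^n|\le1$. Together with $\|J^nt^n\|_2\ge a\|J^n\|_2$, this gives an $L^2$ bound on $J^n$. Thus $J^n$ is bounded in $H^1$. This is exactly where the constraint $t\ge a$ is essential.

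\textbf{Compactness.} Up to subsequences:
\begin{itemize}
\item $J^n\rightharpoonup J$ weakly in $H^1$ and $J^n\to J$ strongly in $L^2$, by Rellich--Kondrachov on the Lipschitz domain;
\item $t^n\to t$ strongly in $L^1$, by compactness of $BV\hookrightarrow L^1$, and also a.e.; since $|t^n|\le1$, dominated convergence upgrades this to convergence in every $L^p$, $p<\infty$;
\item $N^n\rightharpoonup N$ weakly in $L^2$.
\end{itemize}
The constraint $t\in[a,1]$ a.e.\ passes to the limit, so $(J,t,N)\in\Lambda$.

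\textbf{Lower semicontinuity.}
\begin{itemize}
\item TV is lower semicontinuous under $L^1$ convergence.
\item The $\lambda$-term, and the $\rho$-term (where in fact strong convergence holds), are lower semicontinuous because norms are weakly lower semicontinuous.
\item The $\alpha$-term is a continuous convex quadratic form on $L^2$, since $\omega\le c_2$, so it converges along the strongly convergent $J^n$.
\item The $\mu$-term is a convex continuous functional of $\nabla J$ in $L^2$, since $\widehat\omega\le K_2$, hence weakly lower semicontinuous.
\end{itemize}

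\textbf{Main obstacle: the fidelity term.} The difficulty is the product $N^n t^n$ of a weakly convergent and a strongly convergent sequence. We show that the residual $R^n:=(J^n+N^n)t^n+A(1-t^n)-I$ converges weakly in $L^2$ to $R=(J+N)t+A(1-t)-I$, and then conclude by weak lower semicontinuity of the norm. We treat the pieces separately:
\begin{itemize}
\item $J^nt^n\to Jt$ strongly in $L^2$, because $\|J^nt^n-Jt\|_2\le\|J^n-J\|_2+\|J(t^n-t)\|_2$ and the second term vanishes by dominated convergence.
\item $A(1-t^n)\to A(1-t)$ in $L^2$, again by dominated convergence.
\item $N^nt^n\rightharpoonup Nt$ in $L^2$. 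For $\varphi\in L^2$ write $\int N^nt^n\varphi=\int N^n(t^n\varphi)$. Here $t^n\varphi\to t\varphi$ strongly in $L^2$ by dominated convergence, with dominating function $|\varphi|$, while $N^n$ is bounded and converges weakly. A weak--strong pairing therefore converges.
\end{itemize}
Hence $E(J,t,N)\le\liminf_n E(J^n,t^n,N^n)=m$, so $(J,t,N)$ is a minimizer.
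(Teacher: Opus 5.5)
Your proof is correct and follows essentially the same direct-method route as the paper. Both arguments use BV compactness for $t$, weak $L^2$ compactness for $N$, an $L^2$ bound on $J$ from the fidelity term via $t\ge a$, Rellich--Kondrachov for $J$, and weak--strong pairing arguments for $J^nt^n$ and $N^nt^n$. The differences are minor (testing with $(0,1,0)$, dominated convergence instead of the $L^1$--$L^\infty$ interpolation), and your expansion of the full double integral in the $\mu$-term to bound $\nabla J^n$ is a cleaner, fully rigorous version of the paper's Fubini step.
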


\begin{proof}
Notice that $E(J,t,N)\geq 0$. Moreover $(0,a,0)\in \Lambda$, where $a$ means the constant function equal $a$ a.e.~in $\Omega$, and 
\begin{equation*}
\begin{aligned}E(0,a,0)=\frac{1}{2}\|(1-a) A-I\|_2^2+\frac{\mu}{2}\|(V)_{\widehat{\omega}}\|_2^2+\frac{\rho}{2}\|a-t_0\|_2^2.\end{aligned}
\end{equation*}
Then, $E(0,a,0)<+\infty$ which implies that $E$ is proper and bounded below. Therefore, the problem (\ref{P}) is well defined and there exists $m=\inf_{(J,t,N)\in\Lambda} E(J, t, N)<\infty.$ Let $(J_n, t_n, N_n)\in\Lambda$ be a minimizing sequence. Then, there exists a constant $M$ (which many change from line to line) such that
\begin{equation*}E(J_n, t_n, N_n)\leq M.\end{equation*}
Using $t_0\in L^2(\Omega)$ and $\|t-t_0\|_2^2\leq M$ we deduce that $(t_n)$ is uniformly bounded in $L^2(\Omega)$.  Because  $\Omega$ is bounded,  Hölder inequality implies  $(t_n)$ is also bounded in $L^1(\Omega)$. Together with $\text{TV}(t) \leq M$, this shows that $(t_n)$ is uniformly bounded in $BV(\Omega)$. Thus, there exist a subsequence (also denoted by $(t_n)$) and  $\hat{t}\in BV(\Omega)$ such that
\begin{equation}\label{convt}t_n\underset{L^1}{\to} \hat{t}, \qquad t_n\underset{L^2}{\rightharpoonup} \hat{t}.\end{equation}
Since, up to a subsequence, strong convergence in $L^1$ implies pointwise convergence a.e., we conclude that $\hat{t}\in[a,1]$ a.e.

Since $\|N_n\|_2^2\leq M$, we have that $(N_n)$ is uniformly bounded in $L^2(\Omega)$. Thus, there exist a subsequence (also denoted by $(N_n)$) and $\hat{N}\in L^2(\Omega)$ such that
\begin{equation}\label{convN}N_n \underset{L^2}{\rightharpoonup} \hat{N}.\end{equation}

Let us now prove that $(J_n)$ is uniformly bounded in $H^1(\Omega)$. 
First, we show that it is bounded in $L^2(\Omega)$. Writing
\begin{equation*}
J_n=\frac{1}{t_n}\Big((J_n+N_n)t_n+(1-t_n)A-I-N_nt_n-A(1-t_n)+I\Big),
\end{equation*}
and using the triangle inequality, together with 
$\|(J_n+N_n)t_n+(1-t_n)A-I\|_2^2\le M$, 
$t_n\le 1$ a.e.~in $\Omega$, $\|1/t_n\|_\infty\le 1/a$, and the fact that 
$A,I,N_n\in L^2(\Omega)$, we conclude that $(J_n)$ is uniformly bounded in $L^2(\Omega)$.

Next, we prove that $(\nabla J_n)$ is uniformly bounded in $L^2(\Omega)$. Since $\|(\nabla J_n-V)_{\widehat{\omega}}\|_2^2\leq M$, by Fubini's theorem, for almost every  $y\in\Omega$ and   every color channel $c\in\{1,2,\dots, C\}$,
\begin{equation*} \int_\Omega |\sqrt{\widehat{\omega}(x,y)}\circ\left(\nabla J_{n,c}(x)- V_c(y)\right)|^2 dx\leq M.\end{equation*}
Using the lower bound of the weight function, we get $\int_\Omega |\nabla J_{n,c}(x)- V_c(y)|^2 dx\le M$. 
Then, \begin{equation*}
    \begin{aligned}
\int_\Omega |\nabla J_{n,c}(x)|^2 dx &\leq 2 \int_\Omega |\nabla J_{n,c}(x)-V_c(y)|^2 dx\\& + 2\int_\Omega |V_c(y)|^2 dx\leq M, \end{aligned}\end{equation*}
and we can deduce that $\nabla J_n$ is uniformly bounded in $L^2(\Omega)$.
We conclude $J_n$ is uniformly bounded in $H^1(\Omega).$ Then, there exists a subsequence of $(J_{n})$ (also denoted by $(J_n)$) and  $\hat{J}\in H^1(\Omega)$ such that $J_n\rightharpoonup \hat{J}$ in $H^1(\Omega)$.
Using Rellich-Kondrachov compactness Theorem in Sobolev spaces \cite{adams2003sobolev}, up to a subsequence, we can ensure that 
\begin{equation}\label{convJ}J_{n}\underset{L^2}{\to} \hat{J}, \qquad \nabla J_{n}\underset{L^2}{\rightharpoonup} \nabla \hat{J}. \end{equation}

Since $t_n, \hat{t}\in [a, 1]$ a.e.~in $\Omega$, then $\|t_n\|_\infty\leq 1$ and $\|\hat{t}\|_\infty\leq 1$. This implies that $J_n t_n\in L^2(\Omega)$ and $\hat{J}\hat{t}\in L^2(\Omega)$. 
Now, we prove that the strong convergence of $(J_n)$ to $\hat{J}$  in $L^2(\Omega)$ (\ref{convJ}) and the strong convergence of $(t_n)$ to $\hat{t}$  in $L^1(\Omega)$ (\ref{convt}),  jointly with the fact that $t_n, \hat{t}\in L^\infty(\Omega)$, imply  that $J_n t_n\rightharpoonup \hat{J}\hat{t}$ in $L^2(\Omega)$. 
By density of $C_c^\infty(\Omega)$ in $L^2(\Omega)$ it suffices to prove $\big|\int_\Omega (J_nt_n-\hat{J}\hat{t})\varphi \, dx\big|\to 0$ for all $\varphi \in C_c^\infty(\Omega)$. Notice that \begin{equation*}\left|\int_\Omega (J_nt_n-\hat{J}\hat{t})\varphi\right|\leq\left|\int_\Omega(J_n-\hat{J})t_n\varphi\right|+\left|\int_\Omega \hat{J}(t_n-\hat{t})\varphi\right|.\end{equation*}
Since 
\begin{equation*}\begin{aligned}
\left|\int_\Omega \hat{J}(t_n-\hat{t} )\varphi\right|&\le \|\varphi\|_\infty\|t_n-\hat{t}\|_2\|\hat{J}\|_2\\&\le\|t_n-\hat{t}\|_1^{\frac{1}{2}}\|t_n-\hat{t}\|_\infty^{\frac{1}{2}}\|\varphi\|_\infty\|\hat{J}\|_2\to 0\end{aligned}\end{equation*}
and
\begin{equation*}\left|\int_\Omega (J_n-\hat{J}) t_n\varphi\right|\leq \|t_n\|_2 \|J_n-\hat{J}\|_2 \|\varphi\|_\infty\to 0,\end{equation*}
then we deduce that\begin{equation}\label{convJt}
J_n t_n  \rightharpoonup \hat{J} \hat{t}\quad \text{in } L^2(\Omega).
\end{equation}

Now, we prove that 
\begin{equation}\label{convNt}
N_n t_n  \rightharpoonup \hat{N} \hat{t}\quad \text{in } L^2(\Omega).
\end{equation}
Since $( N_n t_n)$ is uniformly bounded in $L^2(\Omega)$, it suffices to prove $\big|\int_\Omega (N_nt_n-\hat{N}\hat{t})\varphi\big| \to 0$ for all $\varphi \in C_c^\infty(\Omega)$. Notice that
\begin{equation*}\left|\int_\Omega (N_nt_n-\hat{N}\hat{t})\varphi\right|\leq \left| \int_\Omega N_n(t_n-\hat{t}) \varphi \right|+\left|\int_\Omega \hat{t}(N_n-\hat{N})\varphi\right|.\end{equation*}
Using the strong convegence of $(t_n)$ in $L^1(\Omega)$ (\ref{convt}), we have
\begin{equation*}
\begin{aligned}
\left| \int_\Omega N_n(t_n-\hat{t}) \varphi \right|
&\le
\|N_n\|_2 \|t_n-\hat{t}\|_2 \|\varphi\|_{\infty} \\ &\le \|N_n\|_2 \|t_n-t\|_1^{\frac{1}{2}}\|t_n-t\|_\infty^{\frac{1}{2}}\|\varphi\|_\infty\to 0.   
\end{aligned}
\end{equation*}
The second term converges to zero as consequence of the weak convergence of $N_n$ (\ref{convN}) and the fact that 
$\hat{t}\varphi \in L^2(\Omega)$. 

Therefore, up to a subsequence, $(J_n, t_n, N_n)$ satisfies (\ref{convt}), (\ref{convN}), (\ref{convJ}), (\ref{convJt}), and (\ref{convNt}). Moreover, $(\hat{J}, \hat{t}, \hat{N})\in\Lambda$. By the weak lower semicontinuity of the $L^2(\Omega)$-norm, and the lower semicontinuity of $\text{TV}(\Omega)$ in $L^1(\Omega)$, we obtain
\begin{equation*}
m\leq E(\hat{J},\hat{t},\hat{N})\leq \liminf_{n\to\infty} E(J_n, t_n, N_n) =m.
\end{equation*}
We conclude that $(\hat{J}, \hat{t},\hat{N})$ is a minimizer of $E(J,t,N)$.
\end{proof}

\subsection{Numerical approximation}
To minimize the proposed energy, we adopt a primal-dual strategy based on the Chambolle–Pock algorithm \cite{Chambolle}, which rewrites the minimization problem as a saddle-point formulation. 
The latter is solved through proximal updates associated with the primal and dual variables.  Since the data-fidelity term is nonconvex, its proximal step is approximated through a block-coordinate strategy, where each variable is updated while fixing the remaining ones at their latest estimates. This alternating minimization procedure leads to a local minimizer of the original energy \cite{chambolle2016introduction}. Moreover, all terms involving the residual component $N$ are smooth, allowing a closed-form minimization at each iteration.

In practice, the nonlocal weights are restricted to search windows around each pixel to reduce computational cost. Accordingly, the weights associated with the nonlocal gradient-type constraint are defined, for $l\in\{1,2,\ldots, d\}$, as
$$
\widehat{\omega}_l(x,y)=\frac{1}{\Gamma_l(x)}\exp\left(-\dfrac{\widehat{d}(\widehat{P}(x),\widehat{P}(y))}{\hat{h}^2_{\text{sim}}} \right)
$$
for $y\in B_{\hat{\nu}}(x)$, and set to zero otherwise. Here, $\hat{\nu}>0$ determines the size of the search window, $\widehat{P}(x)$ denotes a patch centered at $x$ extracted from the amplified gradient $V$ defined in \eqref{eq-v}, $\hat{h}_{\mathrm{sim}}>0$ is a filtering parameter, and $\Gamma_l(x)$ is a normalization factor. The similarity measure is computed as
$$
\widehat{d}(\widehat{P}(x),\widehat{P}(y))=\sum_{z\in B_{\hat{\kappa}}(0)} |V_l(x+z)-V_l(y+z)|^2,
$$
which corresponds to the squared Euclidean distance between color patches of size $(2\hat{\kappa}+1)\times (2\hat{\kappa}+1)$ centered at $x$ and $y$ on the $l$-th component of $V$. To avoid an excessive influence of the reference pixel,  $\widehat{w}(x,x)$ is set to the maximum weight within the search window for $y\neq x$. Finally, the weights of the nonlocal prior are defined analogously, taking into account both the spatial distance between pixels and the similarity between patches extracted from the degraded image $I$. 

\begin{figure}[t]
\centering
\subfloat[MambaResNet]{
\includegraphics[trim=0.5cm 0.25cm 1.1cm 0.25cm, clip, width=0.97\linewidth]{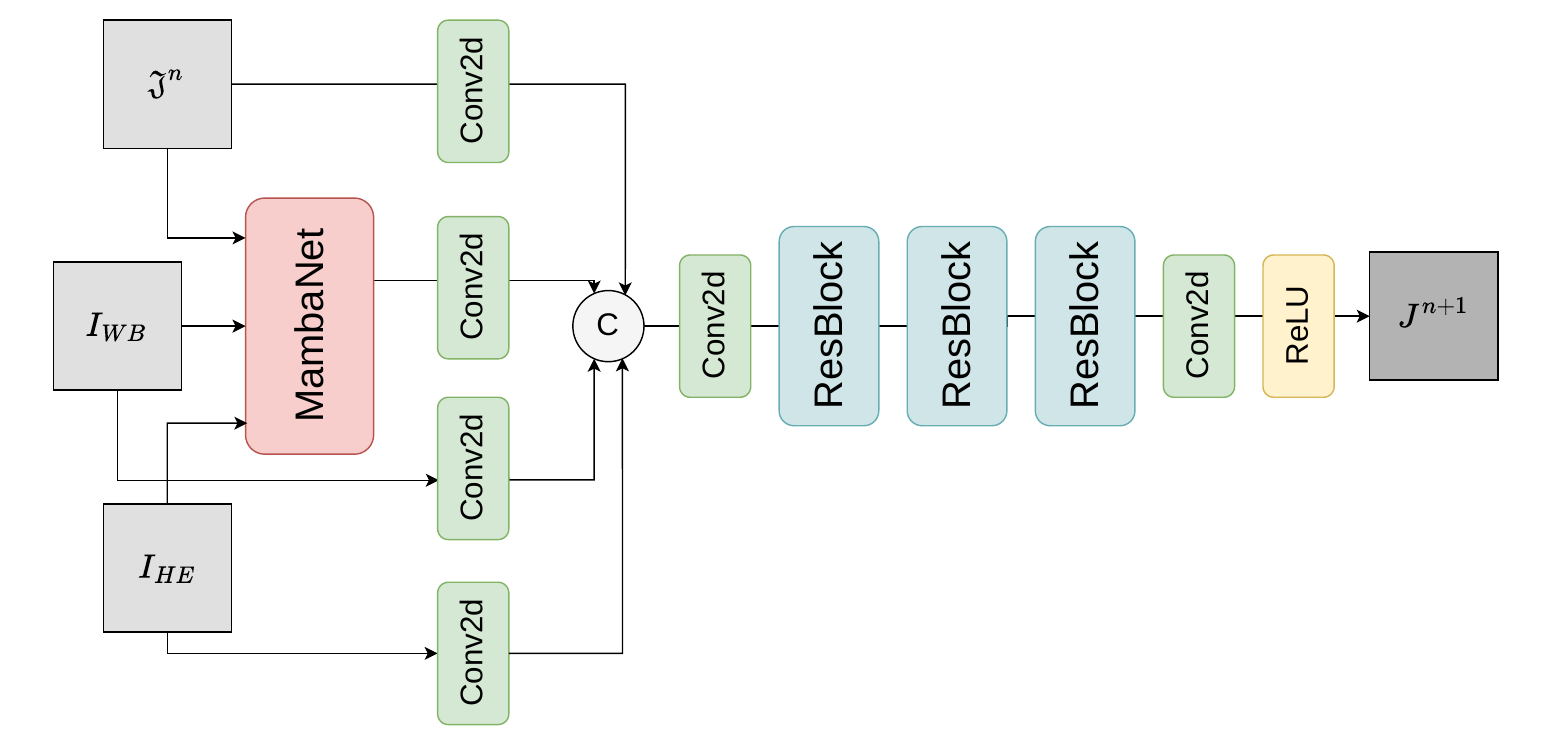}
}\\
\subfloat[MambaNet]{
\includegraphics[trim=0.5cm 0.25cm 0.7cm 0.25cm, clip, width=0.97\linewidth]{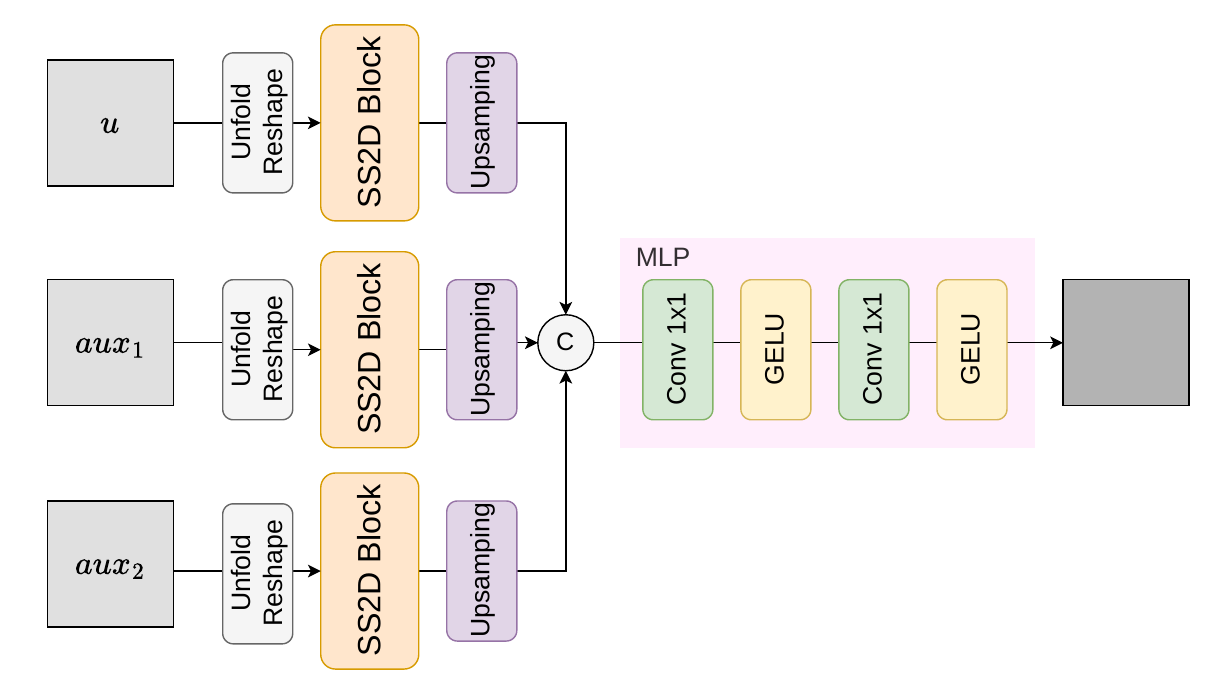}
}\\
\subfloat[SS2D Block]{
\includegraphics[trim=0.98cm 0.25cm 0.885cm 0.15cm, clip, width=0.97\linewidth]{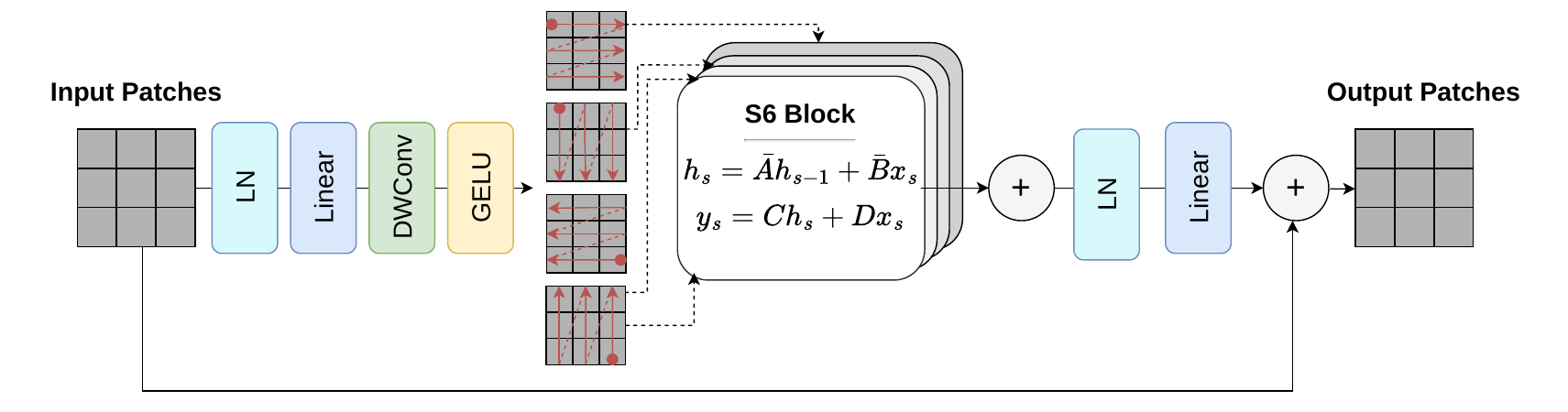}
}
\caption{Architectures of the networks involved in the proposed deep unfolding framework. (a) Proposed MambaResNet. (b) Proposed MambaNet module. When used as a component within MambaResNet, the inputs are $u=\mathfrak{J}^n, aux_1= I_{\text{WB}}$, and $aux_2= I_{\text{HE}}$. When employed to model the nonlocal operator in the gradient fidelity term, the inputs become $u=\nabla I, aux_1= \nabla I_{\text{WB}}$, and $aux_2= \nabla I_{\text{HE}}$. (c) SS2D block.}
\label{fig:architecture}
\end{figure}

\section{Deep Unfolding Network}

In this section, we develop a deep unfolding network based on the proposed variational model. The network is obtained by unfolding the iterations of the associated optimization algorithm and replacing the proximal operators with learning-based modules. This allows the model to learn image priors directly from data while preserving the interpretability of the underlying variational formulation.

\subsection{Algorithm unfolding}

We integrate two generic convex regularizers into the variational model \eqref{energia}, leading to the problem
\begin{equation*}\label{energia2}
\begin{aligned}
\min_{J,t,N}\; &\frac{1}{2}\|(J+N)t+A(1-t)-I\|_{2}^2+\alpha\Phi(J)+\beta \Psi(t)\\&+\frac{\lambda}{2} \|N\|_{2}^2 +\frac{\mu}{2} \|(\nabla J-V)_{\widehat{\omega}}\|_{2}^2+\frac{\rho}{2}\|t-t_0\|_2^2.
\end{aligned}
\end{equation*}

Following a block coordinate descent strategy \cite{chambolle2016introduction}, we iteratively minimize the energy with respect to one variable while keeping the others fixed at their latest estimates. The optimization steps are computed either through proximal gradient updates or in closed form when the objective is smooth, as in the case of $N$. This yields the following iterative scheme:

\begin{equation*}
    \begin{aligned}
    J^{n+1} &= \prox_{\tau \alpha \Phi} \left( \mathfrak{J}^{n} \right), \\
    t^{n+1}&=\prox_{\tau\beta\Psi}\left(\mathfrak{t}^{n}\right), \\
    N^{n+1} &= \dfrac{t^{n+1}\left(I - J^{n+1}t^{n+1}-(1-t^{n+1})A\right)}{\lambda+(t^{n+1})^2},
    \end{aligned}
\end{equation*}
where
\begin{equation*}
\begin{aligned}
&\mathfrak{J}_c^{n}(x)\hspace{-0.02cm} =\hspace{-0.02cm} J_c^n(x)\hspace{-0.05cm}  
-\hspace{-0.05cm} \tau t^n(x) \hspace{-0.02cm} \Big(\hspace{-0.05cm}(J_c^n(x)\hspace{-0.05cm} 
+\hspace{-0.05cm} N_c^n(x))t^n(x)\hspace{-0.05cm} 
-\hspace{-0.05cm} I_c(x)\Big.\\
&\Big. \hspace{-0.08cm} + A_c(x)(1\hspace{-0.05cm}-\hspace{-0.05cm} t^n(x))\hspace{-0.06cm}\Big)
\hspace{-0.08cm}+\hspace{-0.08cm} \tau \mu\, \div\hspace{-0.1cm} \left(\hspace{-0.1cm} 
\nabla J_c^n(x)\hspace{-0.01cm} - \hspace{-0.3cm}
\sum_{y\in B_{\hat{\nu}}(x)} \hspace{-0.2cm}
\widehat{\omega}(x,y)\hspace{-0.05cm} \circ \hspace{-0.05cm}V_c(y)\hspace{-0.1cm}\right)
\end{aligned}
\end{equation*}
and
\begin{equation*}\begin{aligned}
\mathfrak{t}^{n}= t^n&-\tau\rho(t^n-t_0)-\tau t^n\sum_{c=1}^C \left(J_c^{n+1}+N_c^n-A_c\right)^2\\
&-\tau\sum_{c=1}^C (A_c-I_c)\left(J_c^{n+1}+N_c^n-A_c\right).\end{aligned}\end{equation*}

By mapping the iterative scheme into a deep architecture, we obtain a network composed of $K$ stages. The proximal operator related to the transmission map regularization, $\prox_{\tau\beta\Psi}$, is replaced by a residual network, ProxNet$^n$. On the other hand, the proximal operator corresponding to the regularization of the sought image, $\prox_{\tau \alpha \Phi}$, is replaced by a Mamba-based residual network, MambaResNet$^n$, designed to mimic the behaviour of a nonlocal prior. The nonlocal operator involved in the gradient fidelity term is modeled through a multi-branch Mamba module, MambaNet$^n$. Therefore, the unfolded proximal gradient scheme becomes
\begin{equation*}
    \begin{aligned}
    J^{n+1} &= \text{MambaResNet}^{n+1}\left(\mathfrak{J}^{n} \right), \\
    t^{n+1}&=\text{ProxNet}^{n+1}\left(\mathfrak{t}^{n} \right), \\
    N^{n+1} &= \dfrac{t^{n+1}\left(I - J^{n+1}t^{n+1}-(1-t^{n+1})A\right)} {\lambda+ (t^{n+1})^2},
    \end{aligned}
\end{equation*}
where $\mathfrak{J}^{n}$ is now given by
\begin{equation*}\begin{aligned}
\mathfrak{J}^{n}= J^n &- \tau  t^n \left(\left(J^n + N^n\right) t^n - I  +A(1-t^n)\right)\\&+ \tau \mu \div \left( \nabla J^n - \text{MambaNet}^n\left(V\right) \right).\end{aligned}\end{equation*}
Thus, each stage consists of three successive updates corresponding to the three primal variables.

 The modules ProxNet$^n$, MambaResNet$^n$, and MambaNet$^n$ do not share learnable weights between the different stages, although the same architectures, which are described in the next subsection, are used throughout. The hyperparameters $\alpha, \beta, \lambda, \mu, \rho$, and $\tau$ are shared across all stages and learned during training. Moreover, $J$ is initialized as the degraded image, $N$ as an all-zero variable, and $t_0$, computed using Dark Channel Prior \cite{he2010single}, is used as the initial transmission map. Finally, we take $V$ as the image gradient $\nabla I$.

\begin{figure*}[t]
\centering
\begin{tabular}{c@{\hskip 0.05em} c@{\hskip 0.05em} c@{\hskip 0.05em} c@{\hskip 0.05em} c} 
    \spyimageStagestest{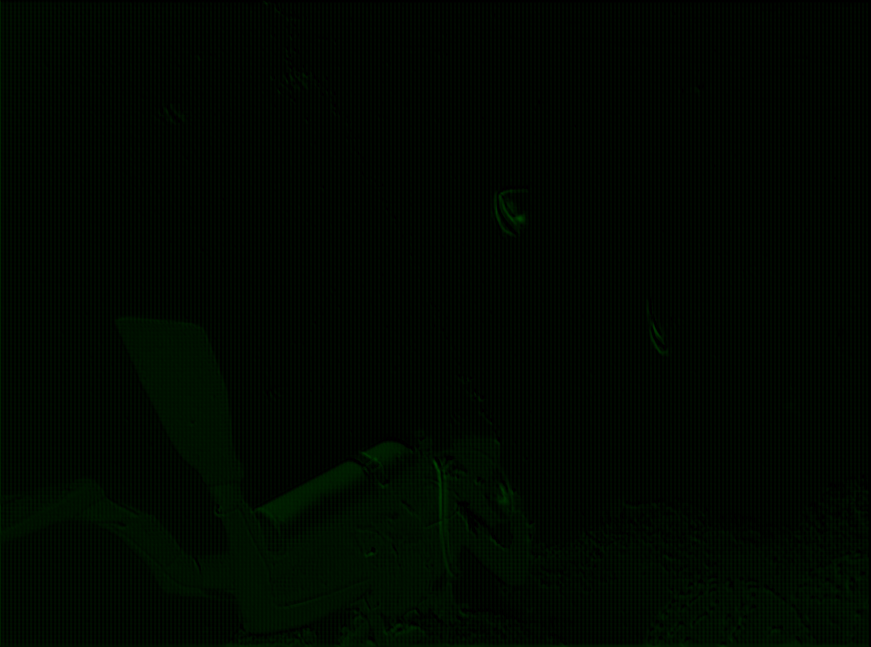}  & \spyimageStagestest{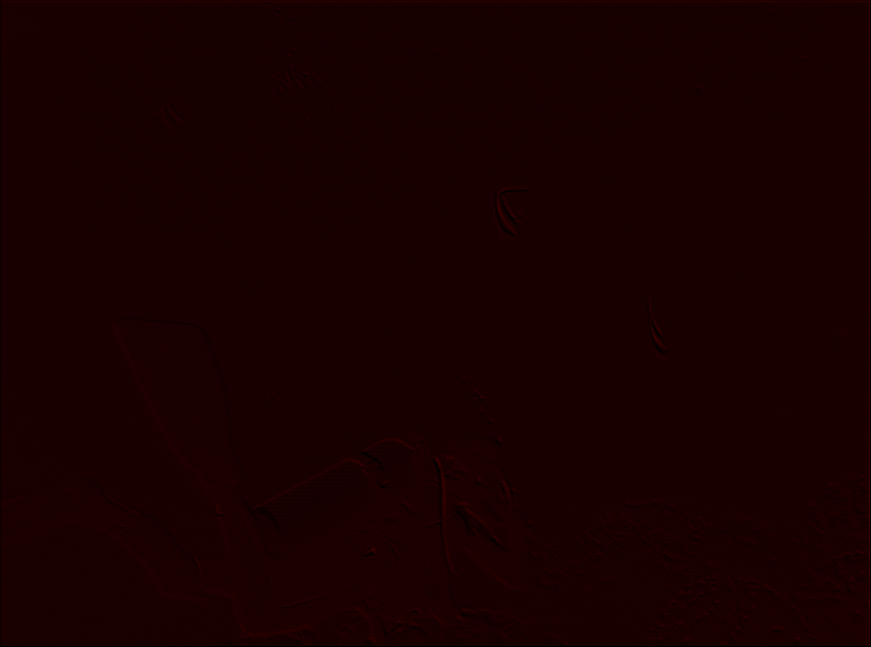}  & \spyimageStagestest{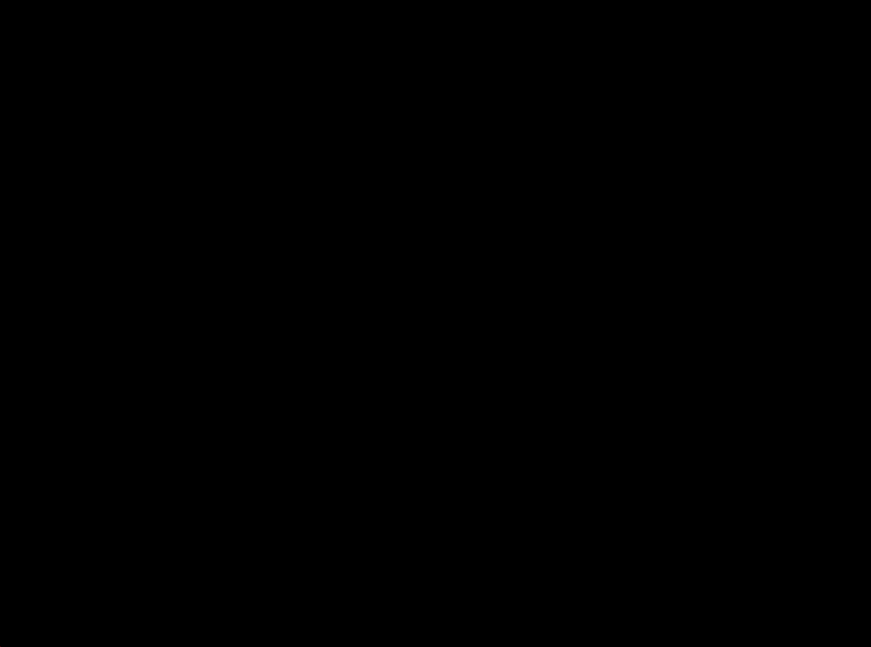}  &
    \spyimageStagestest{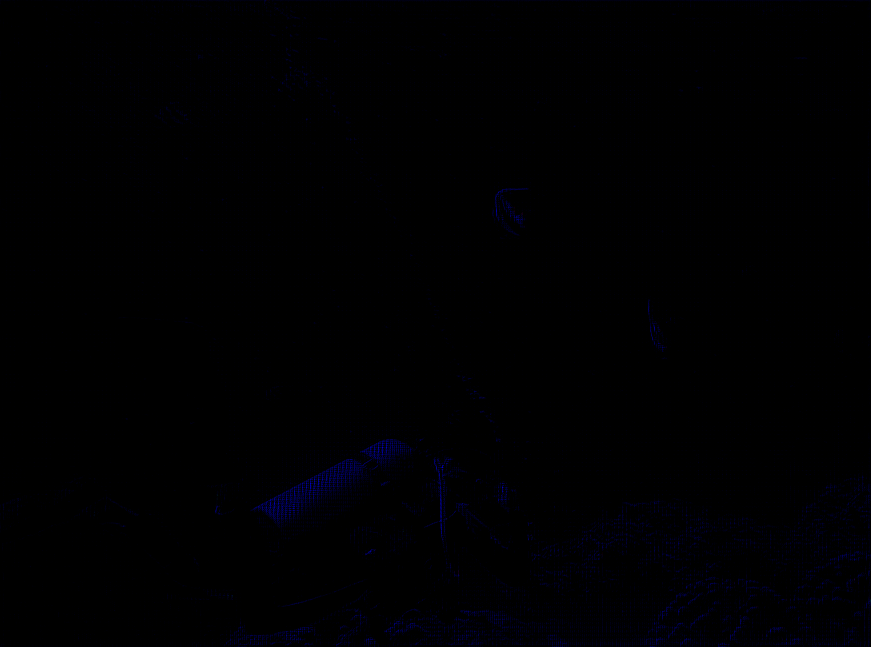}  &
    \spyimageStagestest{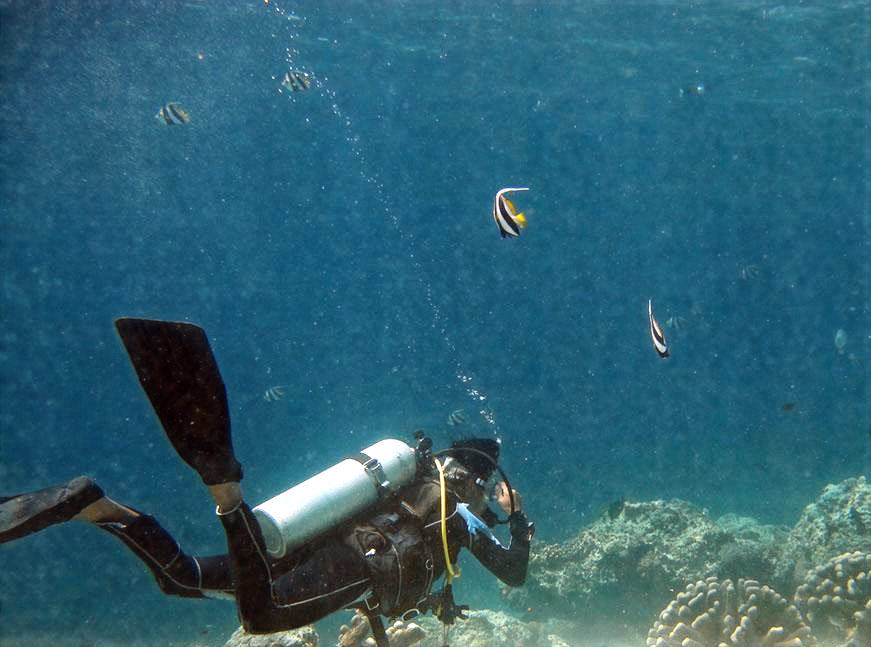} 
 \\
    \spyimageStagestest{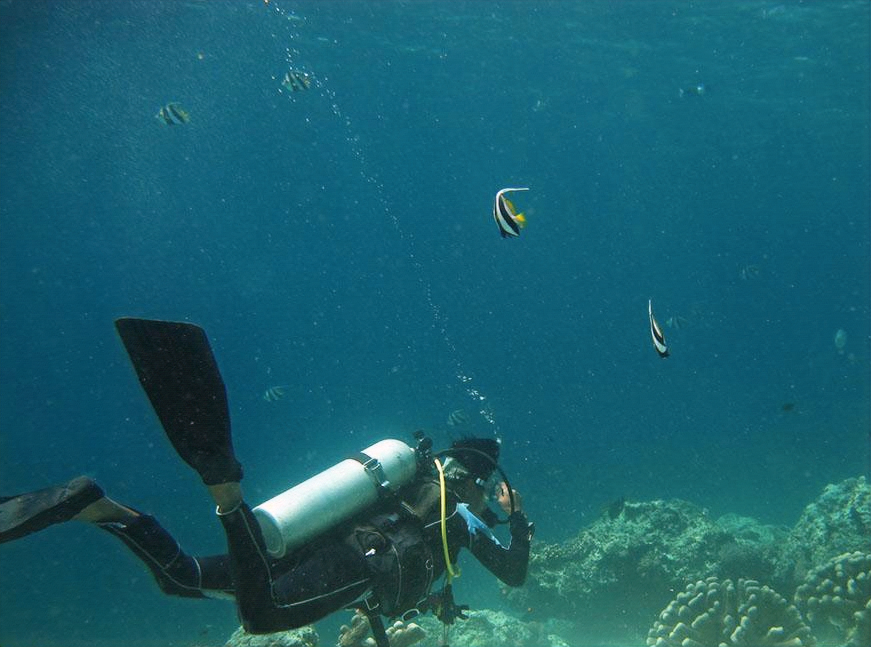}  & \spyimageStagestest{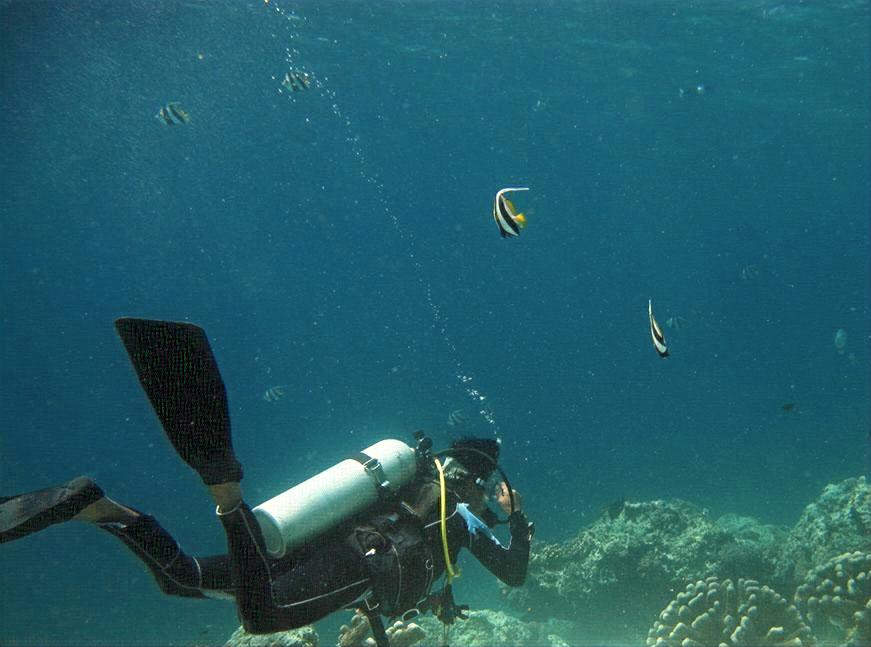}  & \spyimageStagestest{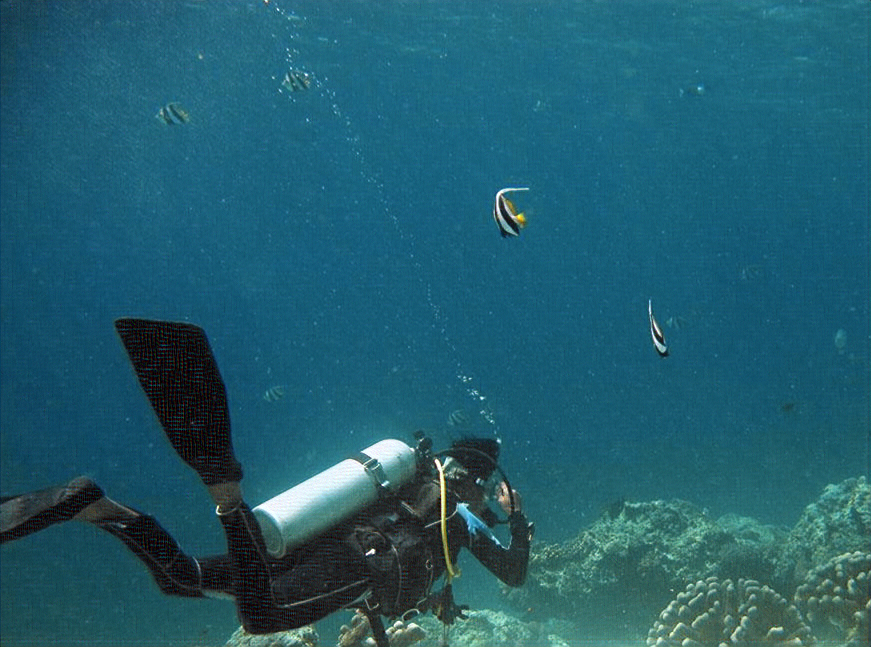}  &
    \spyimageStagestest{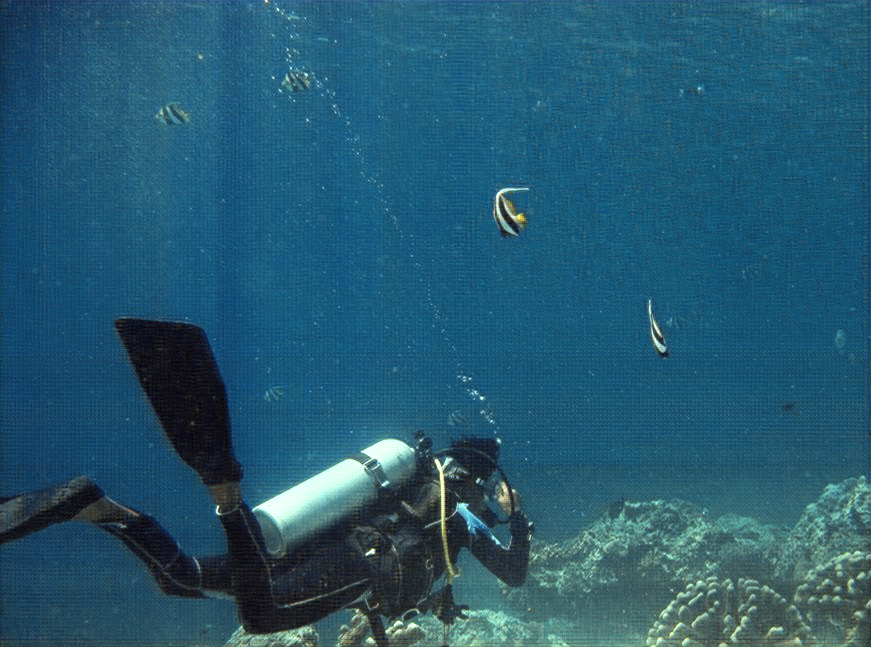}  &
    \spyimageStagestest{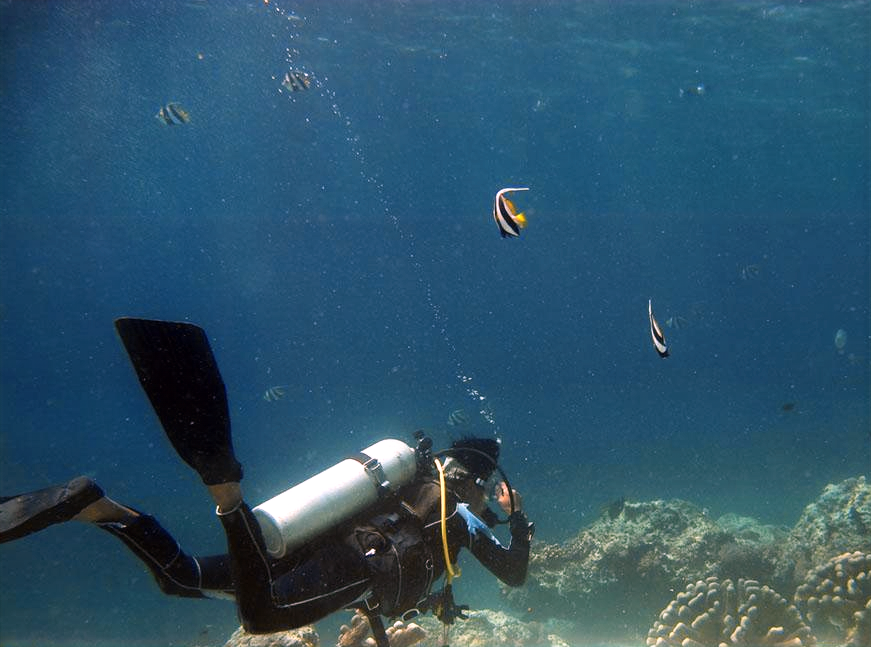} \\
    \spyimageStagestest{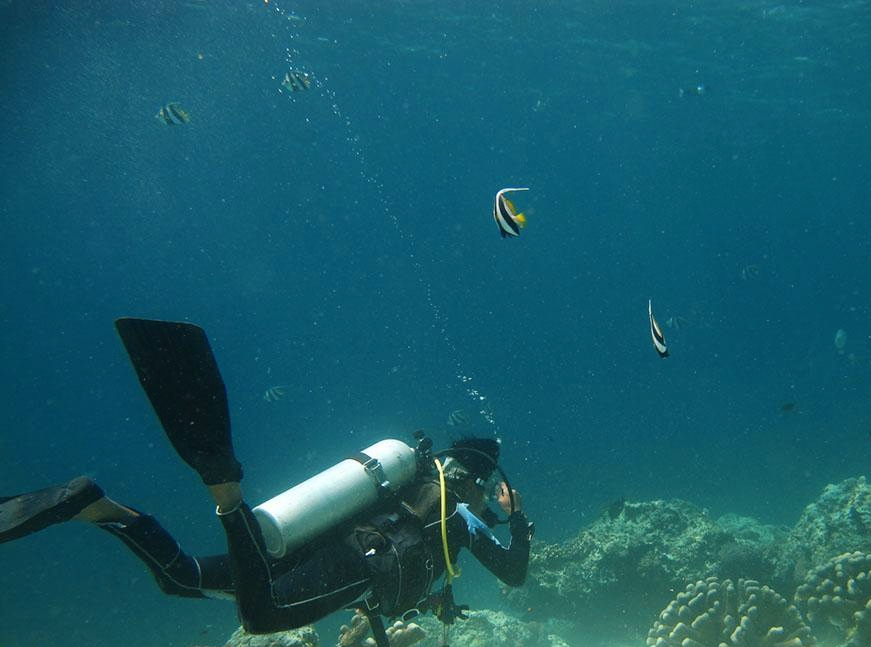}  & \spyimageStagestest{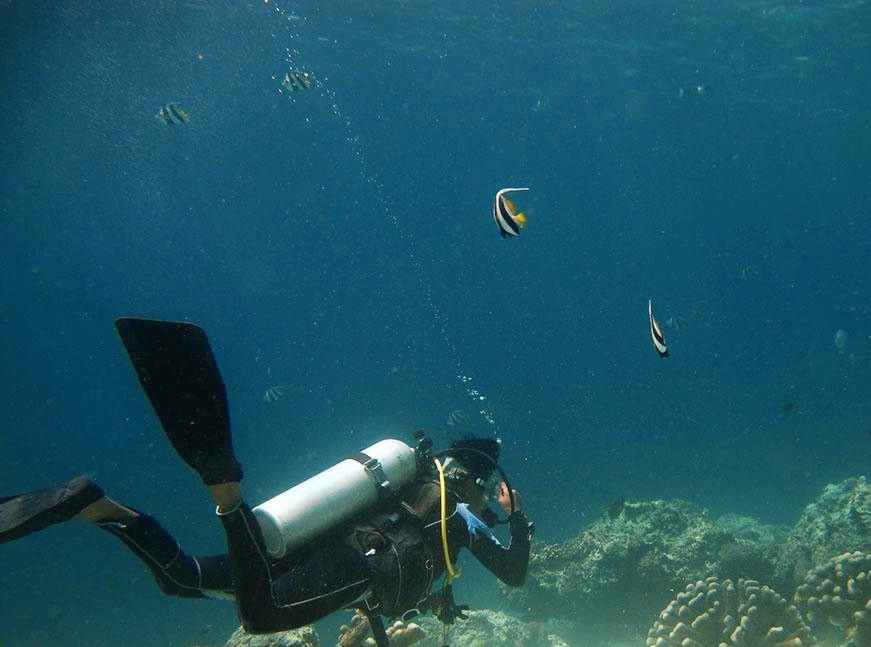}  & \spyimageStagestest{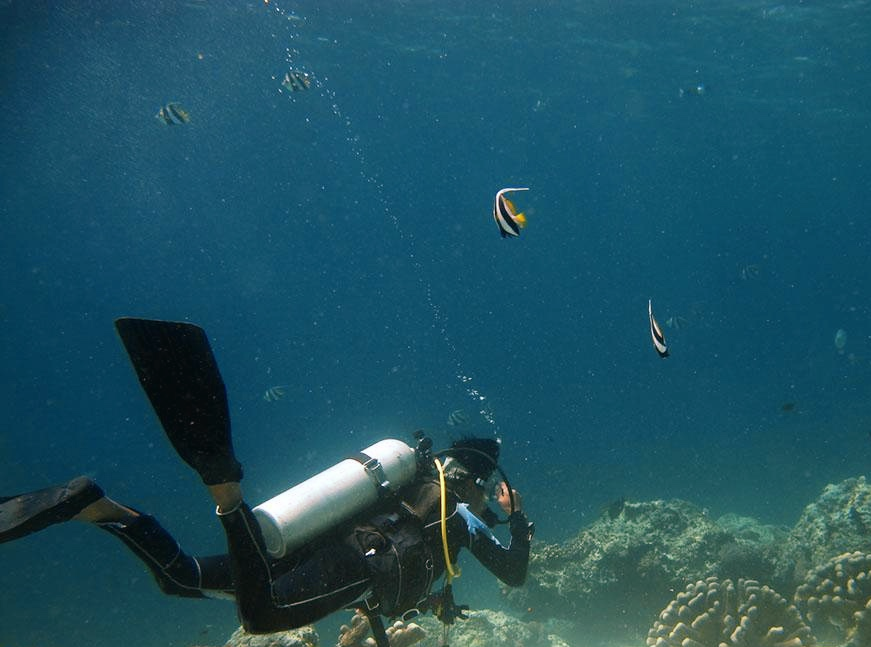}  &
    \spyimageStagestest{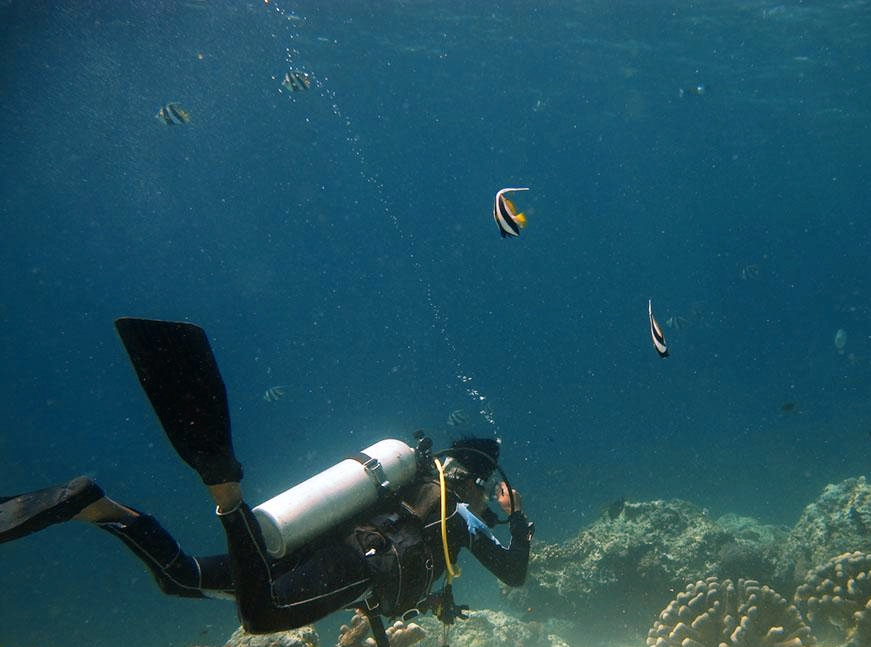}  &
    \spyimageStagestest{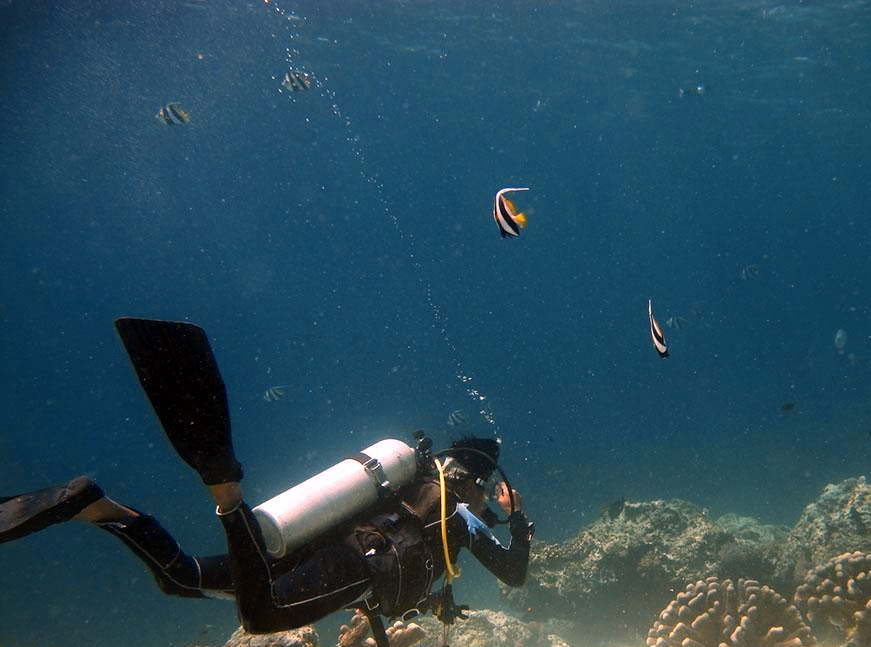}  \\
      Stage 1 &  Stage 2 &  Stage 3 & Stage 4 & Stage 5 \\
\end{tabular}
\caption{Comparison of intermediate stages of the unfolding network. First row: outputs obtained when only the MSE loss on the final output is used, showing that intermediate stages fail to produce meaningful results. Second row: outputs obtained with MSE loss and proximal trajectory supervision, illustrating that the intermediate stages progressively evolve toward the reference image. Third row: corresponding ground-truth target associated with each stage, ${J}_{\text{prox}}^{n}$.}
\label{fig:abl_stages}
\end{figure*}

\subsection{Architectures}\label{architecture}

In this section, we describe the architectures of ProxNet$^n$,  MambaResNet$^n$, and MambaNet$^n$, which are used to replace the proximal and nonlocal operators appearing in the unfolded optimization scheme.

Recall that the proximal operator satisfies
\begin{equation*}
{y}=\prox_{\tau\phi}(x)\iff {y}\in\left(Id+\tau\partial\phi\right)^{-1}\left(x\right),
\end{equation*}
which can be interpreted as a perturbation of the identity, making residual architectures particularly suitable for their modeling. Therefore, ProxNet$^n$, which replaces the proximal operator associated with the transmission map regularization, is implemented as a lightweight residual network. It consists of three residual blocks, each comprising two convolutional layers with skip connections, and ends with a sigmoid activation to enforce the physical constraint $t \in (0,1)$.

To efficiently capture long-range dependencies in the proximal operator associated with the image prior and in the nonlocal gradient-type constraint, we propose leveraging recent advances in State Space Models (SSMs) \cite{liu2024vmamba}. SSMs map an input sequence $x(s)\in\mathbb{R}$ to an output $y(s)\in \mathbb{R}$ through a hidden state $h(s)\in\mathbb{R}^{d_\text{state}}$. The continuous formulation is given by the linear system
\begin{equation*}
h'(s)=Ah(s)+Bx(s), \qquad y(s)=Ch(s)+Dx(s),
\end{equation*}
where $A,B,C,D$ are learnable parameters. To embed SSMs within the learning framework, the system is discretized with the Zero-Order Hold method using a step size $\Delta$, leading to
\begin{equation}\label{mamba_ssm}
h_s=\bar A h_{s-1}+\bar B x_s, \qquad y_s=Ch_s+Dx_s,
\end{equation}
where $\bar A=e^{\Delta A}$ and $\bar B=(\Delta A)^{-1}(e^{\Delta A}-I)\Delta B$.
While classical SSMs are linear time-invariant, Mamba \cite{gu2024mamba} introduces a selective SSM (S6) where the discretization parameters depend on the input. This preserves the recursive structure, enabling the modeling of long-range dependencies while remaining computationally efficient through a parallel scan algorithm.

MambaResNet$^n$ is a Mamba-based residual network, illustrated in Figure \ref{fig:architecture}(a). 
To better exploit complementary information derived from different enhancements of the degraded input, we adopt a multi-branch design inspired by image fusion strategies. Specifically, three parallel branches are employed. The first one receives the argument of the proximal operator, $\mathfrak{J}^n$, while the remaining two branches process auxiliary versions of the input image, namely, a white-balance enhanced image, $I_{\text{WB}}$, and a histogram-equalized image, $I_{\text{HE}}$.

Since SSMs operate on sequential data, the input images are first partitioned into non-overlapping patches using an unfolding operation. These patch tokens are then projected and processed by Mamba blocks employing a 2D Selective Scan (SS2D) mechanism \cite{liu2024vmamba}. The S6 module is extended to 2D vision data by scanning the tokenized image along four spatial directions (horizontal, vertical, and their reversed counterparts). The resulting features are subsequently aggregated to form a global representation. Figure \ref{fig:architecture}(c) shows a diagram of this SS2D block.

The outputs of the three Mamba branches are fused through a multi-layer perceptron (MLP), which adaptively combines the complementary representations extracted from the different inputs. 
Finally, a residual component is included in MambaResNet$^n$, ensuring consistency with the proximal interpretation. This design choice allows MambaResNet$^n$ to be a learnable approximation of the proximal operator associated with the nonlocal regularization of the clear image $J$.

Finally, a Mamba-based module, denoted as MambaNet$^n$, is employed to learn the nonlocal filtering operation associated with the gradient fidelity term. The dependence on the weighting function $\hat{\omega}$ is implicitly captured through the input-dependent parameters of the Mamba model in \eqref{mamba_ssm}.  In contrast to MambaResNet$^n$, which operates on image intensities, MambaNet$^n$ processes gradient-domain inputs, enabling the modeling of nonlocal interactions between image gradients. Specifically, the network takes $V=\nabla I$ as the primary input, while the auxiliary branches process the gradients of the enhanced and histogram-equalized images, $\nabla I_{\text{WB}}$ and $\nabla I_{\text{HE}}$. In this case, the output of the MLP is directly used as the enhanced gradient representation. The resulting architecture is illustrated in Figure \ref{fig:architecture}(b).




\subsection{Proximal trajectory loss}

The flexibility and interpretability of unfolding networks originate from their connection to proximal optimization algorithms. However, these same properties also introduce specific challenges. In particular, interpretability depends on the extent to which the learned neural modules behave as true proximal operators. In practice, training objectives typically enforce only that the final output is close to the ground truth, without constraining intermediate stages. As a result, the intermediate iterates of the unfolding network may deviate from the expected optimization trajectory, as observed in Figure~\ref{fig:abl_stages}.

To address this issue, Wang et al.~\cite{wang2025proximal} propose a proximal trajectory loss that encourages the network to follow an explicit optimization path by learning an ideal restoration regularizer. In the underwater imaging context, we adopt a similar principle and define the ideal regularizer by solving the following optimization problem:
\begin{equation*}\min_J\frac{1}{2}\|(J+N)t+A(1-t)-I\|_2^2+\frac{\vartheta}{2}\|J-J_{\text{gt}}\|_2^2,\end{equation*}
where $J_{\text{gt}}$ represents the ground-truth image. Applying a proximal gradient step, where the first term is treated explicitly and the second one through its proximal operator, yields the following iterative scheme:
\begin{equation}
\label{primal_dual_gt}
    \begin{array}{l}
    J^{n+1}=\frac{J^n-\tau t\left((J^n+N)t+A(1-t)-I\right)+\tau\vartheta J_{\text{gt}}}{1+\tau\vartheta}.    \end{array}
\end{equation}

\begin{figure*}[p]
    \centering
\begin{tabular}{c@{\hskip 0.2em} c@{\hskip 0.2em} c@{\hskip 0.2em} c}
\spyimageUIEB{3gt.png} &\spyimageUIEB{3low.png}
    &
    \spyimageUIEB{3CBAF.png} &
    \spyimageUIEB{3Bayesian.png} \\
     Ground Truth & Input &  CBAF  \cite{ancuti2017} &  BayesianRetinex \cite{zhuang2021bayesian} \\ 
    \spyimageUIEB{3MMLE.png} &\spyimageUIEB{3HFM.png}
    &
    \spyimageUIEB{WWPF.png} &
    \spyimageUIEB{3SWUnet.png} \\
    MMLE \cite{zhang2022underwater} & HFM \cite{an2024hfm} & WWPF \cite{zhang2024underwater} &   Shallow-UWNet \cite{naik2021shallow} \\ \spyimageUIEB{3UColor.png} &\spyimageUIEB{3UIEC2Net.png}
    &
    \spyimageUIEB{3auto.png} &
    \spyimageUIEB{3Blue.png} \\
    UColor \cite{li2021underwater} &   UIEC\^{}2-Net\cite{wang2021uiec} &  AutoEnhancer \cite{tang2022autoenhancer} &     UnfoldNet \cite{pham2023deep} \\ 
      \spyimageUIEB{3P2Cnet.png} &
    \spyimageUIEB{3Lite.png} &
    \spyimageUIEB{3RauneNet.png} &
    \spyimageUIEB{3Mamba.png} \\
 P2CNet \cite{rao2023deep} & LiteEnhanceNet \cite{zhang2024liteenhancenet} &  RAUNE-Net\cite{peng2023raune} & Mamba-UIE \cite{zhang2024mamba}\\
        \spyimageUIEB{3Phaseformer.png} &
     \spyimageUIEB{3Poly.png} &
    \spyimageUIEB{3var.png} &
    \spyimageUIEB{Ours_U.png} \\
     Phaseformer \cite{khan2025phaseformer}   & UIR-PolyKernel \cite{guo2025underwater} & Ours-Variational Model & Ours-Unfolding \\
\end{tabular}
    \caption{Visual comparison on a cropped image from the UIEB test set \cite{li2019underwater}. Several methods produce noticeable color distortions or incomplete haze removal, resulting in reddish or greenish tones in the enhanced images. Among the best-performing methods, UColor and Phaseformer introduce excessive reddish tones, while Mamba-UIE and RAUNE-Net are less effective at preserving fine details in the zoomed area and still exhibit a slight greenish tone in the background compared with our unfolding approach.}
    \label{fig:UIEBcomparison}
\end{figure*}

\begin{figure*}[p]
    \centering
\begin{tabular}{c@{\hskip 0.2em} c@{\hskip 0.2em} c@{\hskip 0.2em} c}
\spyimageSUIME{gt.png} &\spyimageSUIME{low.png}
    &
    \spyimageSUIME{CBAF.png} &
    \spyimageSUIME{Bayesian.png} \\
     Ground Truth & Input &  CBAF  \cite{ancuti2017} &  BayesianRetinex \cite{zhuang2021bayesian} \\ 
    \spyimageSUIME{MMLE.png} &\spyimageSUIME{HFM.png}
    &
    \spyimageSUIME{WWPF.png} &
    \spyimageSUIME{SWUnet.png} \\
    MMLE \cite{zhang2022underwater} & HFM \cite{an2024hfm} & WWPF \cite{zhang2024underwater} &   Shallow-UWNet \cite{naik2021shallow} \\ \spyimageSUIME{UColor.png} &\spyimageSUIME{UIEC2Net.png}
    &
    \spyimageSUIME{auto.png} &
    \spyimageSUIME{Blue.png} \\
    UColor \cite{li2021underwater} &   UIEC\^{}2-Net\cite{wang2021uiec} &  AutoEnhancer \cite{tang2022autoenhancer} &     UnfoldNet \cite{pham2023deep} \\ 
      \spyimageSUIME{P2Cnet.png} &
    \spyimageSUIME{Lite.png} &
    \spyimageSUIME{RauneNet.png} &
    \spyimageSUIME{Mamba.png} \\
 P2CNet \cite{rao2023deep} & LiteEnhanceNet \cite{zhang2024liteenhancenet} &  RAUNE-Net\cite{peng2023raune} & Mamba-UIE \cite{zhang2024mamba}\\
        \spyimageSUIME{Phaseformer.png} &
     \spyimageSUIME{Poly.png} &
    \spyimageSUIME{var.png} &
    \spyimageSUIME{ours_U.png} \\
     Phaseformer \cite{khan2025phaseformer}   & UIR-PolyKernel \cite{guo2025underwater} & Ours-Variational Model & Ours-Unfolding \\
\end{tabular}
    \caption{Visual comparison on a cropped image from the SUIM-E test set \cite{qi2022sguie}. Classical approaches produce unnatural water colors, while several deep-learning methods retain greenish tones. LiteEnhanceNet, Phaseformer, Mamba-UIE, and our unfolding approach achieve the best visual quality. However, the proposed unfolding approach more accurately recovers the orange and white regions of the fish in the zoomed area. }
    \label{fig:SUIMEcomparison}
\end{figure*}

Moreover, for computational efficiency, unfolding networks are typically implemented with a limited number of stages $K$, which is smaller than the number of iterations required for convergence of the corresponding classical optimization algorithm. Assuming that the classical solver converges in $S$ iterations, we align the $n$-th stage of the unfolding network with the $\frac{n \cdot S}{K}$-th iteration of the classical algorithm. This leads to the following proximal trajectory loss:
\begin{equation*}
\mathcal{L}_{\prox}(\{J_{\text{out}}^n\}_{n=1}^{K-1}, \{J_{\text{prox}}^{\frac{n\cdot S}{K}}\}_{n=1}^{K-1})=\sum_{n=1}^{K-1}\alpha_n\|J_{\text{out}}^{n}-{J}_{\text{prox}}^{\frac{n\cdot S}{K}}\|_2^2,
\end{equation*}
where $J^n_{\text{out}}$ is the output of the network at stage $n$, ${J}_{\text{prox}}^{n}$ denotes the $n$-th iterate obtained from \eqref{primal_dual_gt}, and the coefficients $\{\alpha_n\}$ balance the weight of each stage.

\begin{table}[t]
\centering
\begin{tabular}{lccc} Method   & PSNR $\uparrow$ & SSIM $\uparrow$ &  LPIPS $\downarrow$\\ \hline Classical  \\ \hline 

 CBAF \cite{ancuti2017} &  16.99&  0.8028&  0.242
\\ 
 BayesianRetinex \cite{zhuang2021bayesian} &  16.31&  0.7350&  0.299
\\ 
 UNTV \cite{xie2022variational} &  15.46&  0.6044&  0.332
\\ 
 MMLE \cite{zhang2022underwater} &  17.66&  0.7408&  0.237
\\ 
 PCDE \cite{zhang2023underwater} &  14.49&  0.6220&  0.348
\\ 
 HFM \cite{an2024hfm} &  16.15&  0.7768&  0.232
\\ 
WWPF \cite{zhang2024underwater} &  18.16&  0.7742&  0.192
\\
Variational Model (Ours) &  22.61&  0.8839 &  0.140
\\
\hline       Deep learning \\ \hline Shallow-UWNet \cite{naik2021shallow} &  17.70&  0.8014&  0.194
\\ 
UColor \cite{li2021underwater} &  22.40&  0.8880&  0.135
\\ 
 UIEC\^{}2-Net\cite{wang2021uiec} &  21.17&  0.8735&  0.115
\\ 
 AutoEnhancer \cite{tang2022autoenhancer} &  18.71&  0.8234&  0.182
\\ 
 UnfoldNet \cite{pham2023deep} &  20.90&  0.8567&  0.201
\\ 
 P2CNet \cite{rao2023deep} &  17.32&  0.7334&  0.227
\\ 
 LiteEnhanceNet \cite{zhang2024liteenhancenet} &  20.88&  0.8567&  0.141
\\ 
RAUNE-Net\cite{peng2023raune} &  22.20&  0.8315&  0.134
\\ 
  Mamba-UIE \cite{zhang2024mamba} &  \underline{23.27} &   \underline{0.9031} &  \underline{0.108}
\\ 
Phaseformer \cite{khan2025phaseformer} &  22.76&  0.8252&  0.130
\\ 
 UIR-PolyKernel \cite{guo2025underwater} &  21.88&  0.8948&  0.120\\ \hline
  Ours &  \bf{25.16} &  \bf{0.9140} &  \bf{0.089} 
\\ 
\end{tabular}
\caption{Quantitative evaluation on the UIEB test set \cite{li2019underwater}. Best results are highlighted in {\bf bold} and second-best results are \underline{underlined}. The proposed unfolding method achieves the best performance across all metrics. Our variational model outperforms classical techinques and remains competitive with most deep learning-based approaches.}
\label{tab:UIEB_metrics}
\end{table}

\begin{table}[t]
\centering
\begin{tabular}{lccc} Method   & PSNR $\uparrow$ & SSIM $\uparrow$ &  LPIPS $\downarrow$\\ \hline Classical  \\  \hline
 CBAF \cite{ancuti2017} & 17.47 & 0.8742 & 0.191 \\ 
 BayesianRetinex \cite{zhuang2021bayesian}  &  18.04&  0.7992&  0.277
\\ 
UNTV \cite{xie2022variational} &  15.48&  0.5587&  0.335
\\ 
MMLE \cite{zhang2022underwater} &  17.86&  0.7582&  0.224
\\ 
PCDE \cite{zhang2023underwater} &  15.62&  0.6809&  0.292
\\ 
 HFM \cite{an2024hfm}  &  17.15&  0.8220&  0.218
\\ 
WWPF \cite{zhang2024underwater} &  18.09&  0.7824&  0.214
\\
Variational Model (Ours) &  20.51 &  0.8913 &  0.138
\\
\hline       Deep learning \\ \hline
Shallow-UWNet \cite{naik2021shallow} &  19.24&  0.8793&  0.126
\\ 
UColor \cite{li2021underwater} &  21.21&  0.8847&  0.128
\\ 
UIEC\^{}2-Net\cite{wang2021uiec}  &  22.29&  0.9181&  0.097
\\
AutoEnhancer \cite{tang2022autoenhancer} &  19.89&  0.8712&  0.158
\\ 
 UnfoldNet \cite{pham2023deep} &  21.98&  0.8983&  0.168
\\ 
P2CNet \cite{rao2023deep} &  18.89&  0.8179&  0.170
\\ 
 LiteEnhanceNet \cite{zhang2024liteenhancenet} &  23.08&  0.9084&  0.100
\\ 
RAUNE-Net\cite{peng2023raune}  &  22.90&  0.9099&  0.102
\\ 
  Mamba-UIE \cite{zhang2024mamba}  &  \underline{23.24} &  \underline{0.9258} &  0.104
\\ 
 Phaseformer \cite{khan2025phaseformer} &  22.86&  0.9067&  \underline{0.093}
\\ 
 UIR-PolyKernel \cite{guo2025underwater} &  22.42&  0.9225&  0.104\\\hline
  Ours &  \bf{25.58} &  \bf{0.9356} &  \bf{0.078}
\\ 
\end{tabular}
\caption{Quantitative evaluation on the SUIM-E test set \cite{qi2022sguie}. Best results are highlighted in {\bf bold} and second-best results are \underline{underlined}. The proposed unfolding method achieves the best performance across all metrics. Our variational model outperforms classical techniques, but remains below recent deep learning approaches.}
\label{tab:SUIME_metrics}
\end{table}

\begin{figure}[t]
\centering
\begin{tabular}{c@{\hskip 0.2em} c} 

\spyimageRetinex{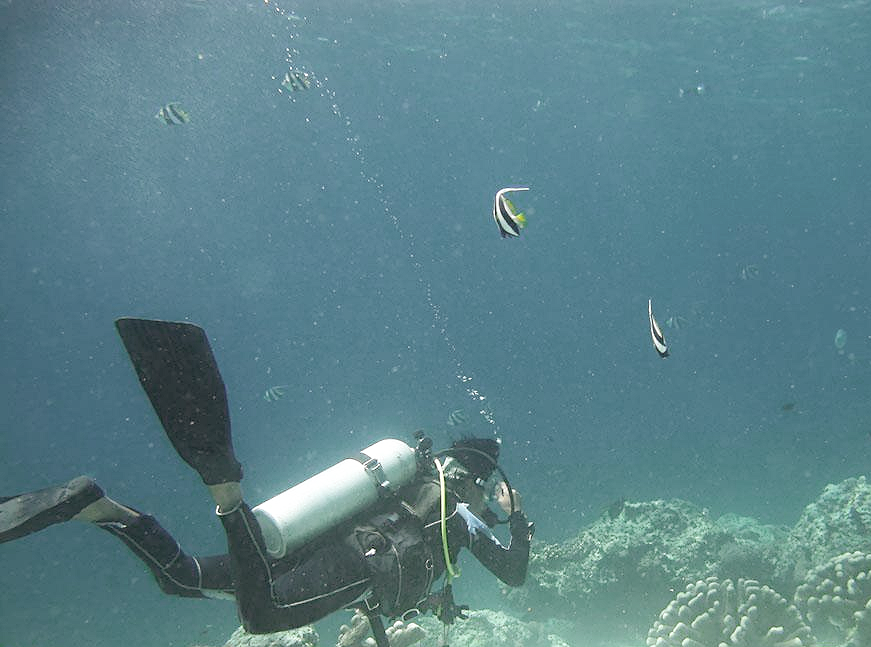} &
    \spyimageRetinex{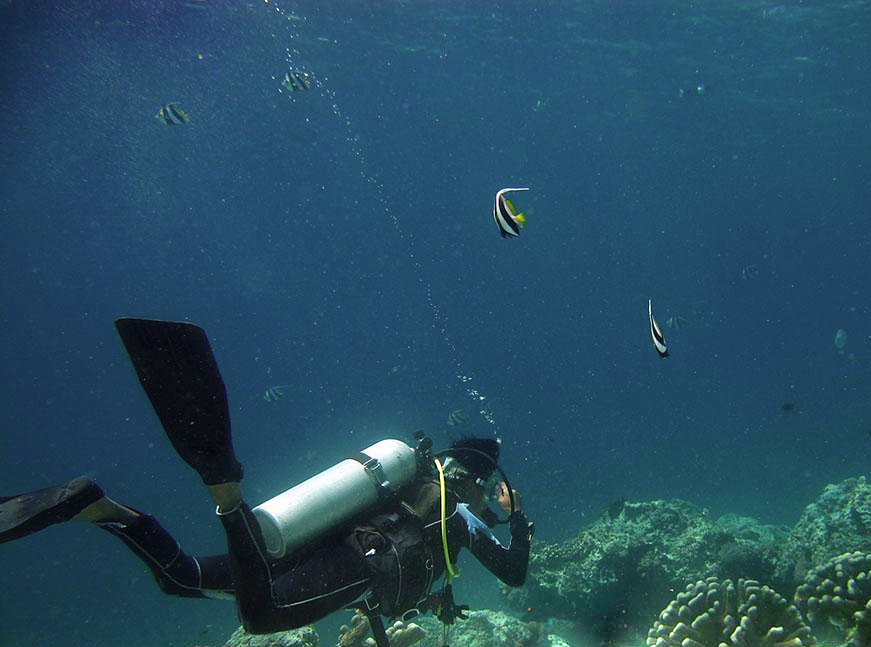} \\
      Retinex model &  Dehazing model \\
\end{tabular}
\caption{Comparison between Retinex-based and dehazing-based formulations. The latter provides superior enhancement, improving visibility, contrast, and color accuracy.}
\label{fig:abl_model_comparison}
\end{figure}

\begin{figure}[t]
\centering
\begin{tabular}{c@{\hskip 0.2em} c}     \spyimagegradfidel{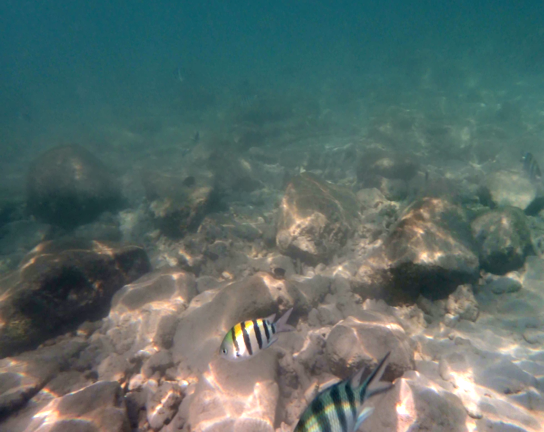} & \spyimagegradfidel{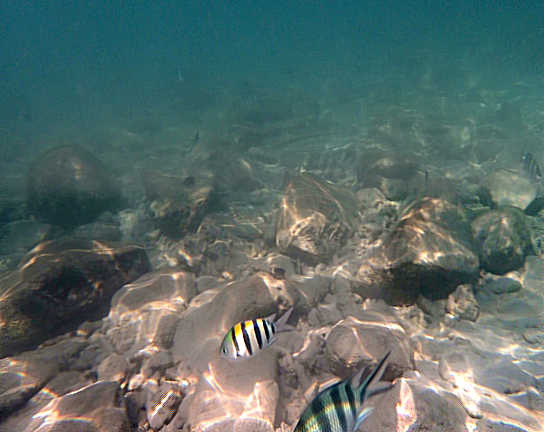} \\
      No gradient fidelity &  Ours \\
\end{tabular}
\caption{Effect of the nonlocal gradient-type constraint in \eqref{energia} for underwater image enhancement. Enforcing the constraint significantly enhances edge contrast, producing sharper details. }\label{fig:abl_grad_fidel}
\end{figure}


\section{Experimental Results}

In this section, we assess the performance of the proposed method and compare it with state-of-the-art techniques for underwater image enhancement.  
In particular, we compare with the classical methods CBAF \cite{ancuti2017}, BayesianRetinex \cite{zhuang2021bayesian}, UNTV \cite{xie2022variational}, MMLE \cite{zhang2022underwater}, PCDE \cite{zhang2023underwater}, HFM \cite{an2024hfm}, and WWPF \cite{zhang2024underwater}; the purely deep learning techniques Shallow-UWNet \cite{naik2021shallow}, UColor \cite{li2021underwater}, UIEC\^{}2-Net\cite{wang2021uiec}, AutoEnhancer \cite{tang2022autoenhancer}, P2CNet \cite{rao2023deep}, LiteEnhanceNet \cite{zhang2024liteenhancenet}, RAUNE-Net\cite{peng2023raune}, Mamba-UIE \cite{zhang2024mamba}, Phaseformer \cite{khan2025phaseformer}, and UIR-PolyKernel \cite{guo2025underwater}; and the unfolding method UnfoldNet \cite{pham2023deep}. 
The source codes were obtained from the authors' webpages and all models were trained according to the specified configurations.

We selected the UIEB dataset \cite{li2019underwater} for training our method and all compared techniques. It consists of 890 real-world underwater images and their corresponding reference ones. A random set of 875 pairs was used for training, with all images resized to 256$\times$256, and the remaining 15 pairs were reserved for testing. For cross-dataset evaluation, we randomly selected 15 images from the SUIM-E dataset \cite{qi2022sguie}. 
Both UIEB and SUIM-E references were generated by considering enhanced images produced by 12 representative enhancement methods and selecting the best result through a volunteer-based evaluation.

\begin{figure}[t]
\centering
\begin{tabular}{c@{\hskip 0.2em}  c} 
\spyimageto{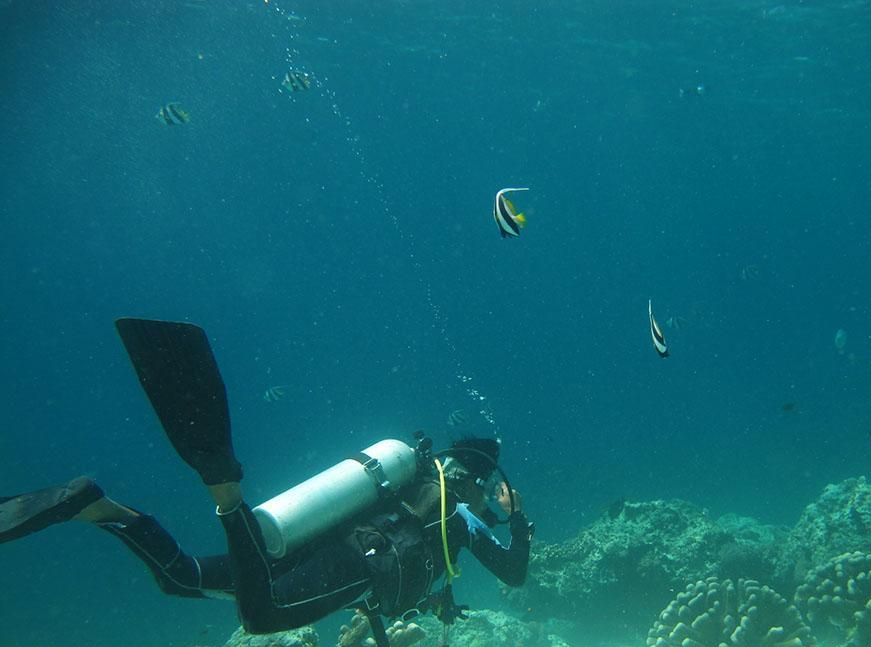}
& \spyimageto{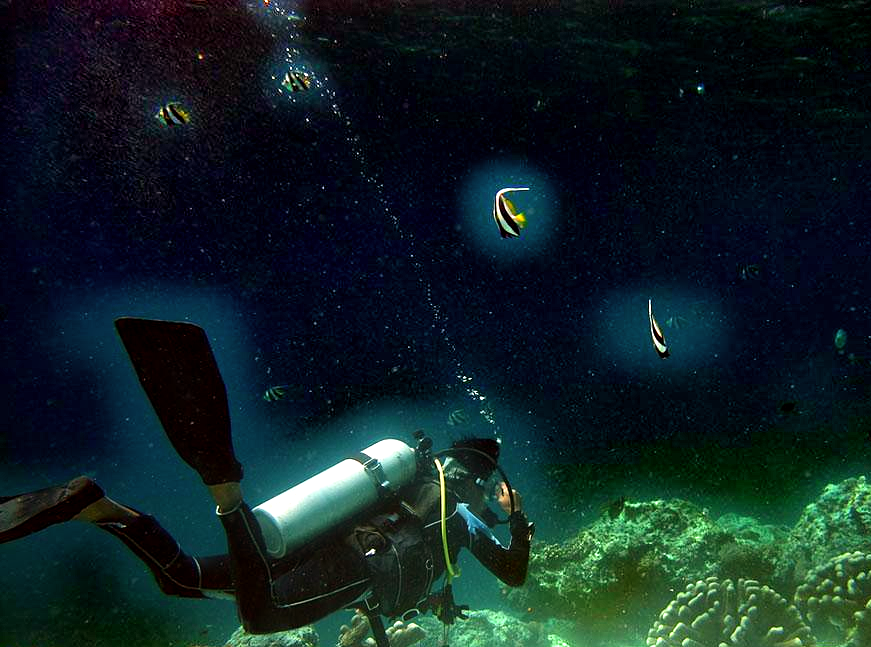}\\ 
Input & UDCP  \cite{drews2013transmission} \\ \spyimageto{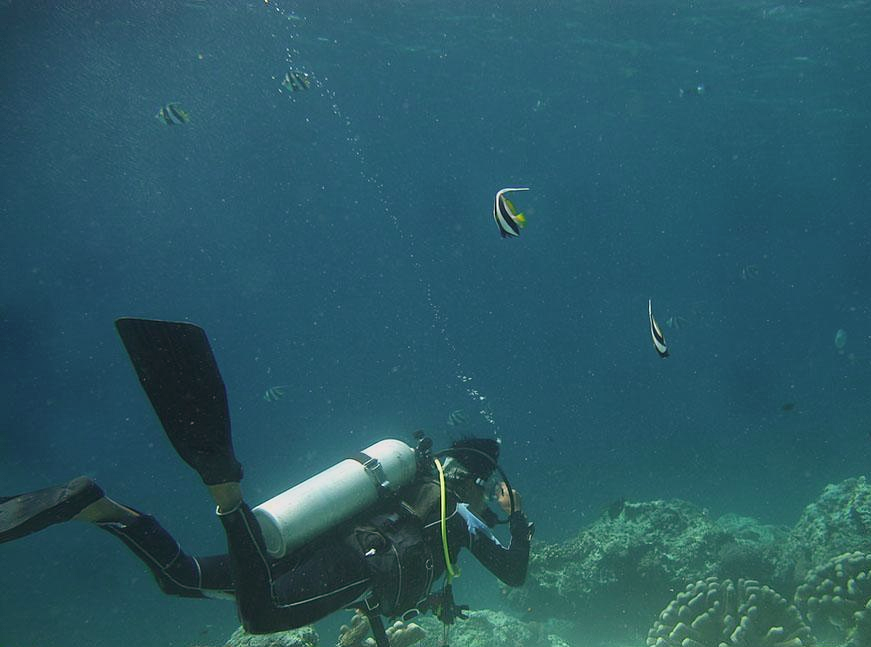} & \spyimageto{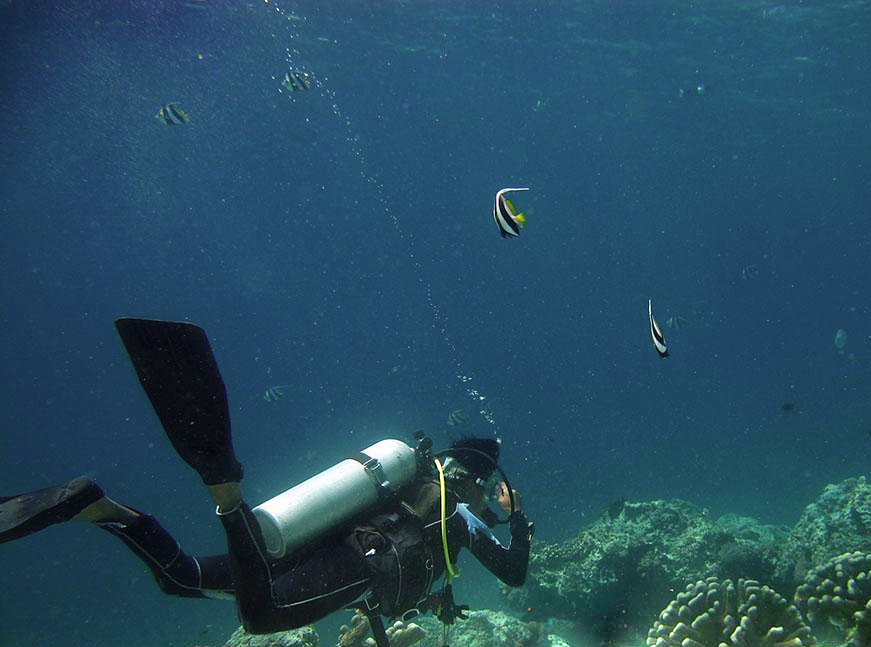}  \\
      RCP \cite{galdran2015automatic} & DCP \cite{he2010single} \\
\end{tabular}
\caption{Comparison of transmission map estimation techniques. The figure shows the final enhanced results obtained using the classical Dark Channel Prior (DCP), Red Channel Prior (RCP), and Underwater Dark Channel Prior (UDCP) as $t_0$ in \eqref{energia}.}\label{fig:abl_t0_fidel}
\end{figure}

The proposed deep unfolding network is trained  in an end-to-end manner over 500 epochs using the loss function
\begin{equation*}
    \begin{aligned}
\beta_1 \text{MSE}(J_{\text{out}}^K, J_{\text{gt}})&+\beta_2\text{LPIPS}\left(J_{\text{out}}^K, J_{\text{gt}}\right)\\&+\mathcal{L}_{\prox}(\{J_{\text{out}}^n\}_{n=1}^{K-1}, \{J_{\text{prox}}^{\frac{n\cdot S}{K}}\}_{n=1}^{K-1})\end{aligned}
\end{equation*}
where $\beta_1=0.95$, $\beta_2=0.01$, and all $\{\alpha_n\}$ of $\mathcal{L}_{\prox}$ are
fixed to 0.01. 
LPIPS \cite{LPIPS} is included to promote perceptually faithful reconstructions, as it aligns better with human visual perception than traditional metrics. We use the Adam optimizer with an initial learning rate of $10^{-4}$ and set the number of stages to $K=5$. For the Mamba modules, the patch size for extracting the non-overlapping patches is fixed to 4, and the state dimension of the SSM is set to 64. 

Table~\ref{tab:UIEB_metrics} reports the quantitative evaluation of all methods on the UIEB test set. The proposed unfolding method consistently achieves the best performance across all metrics. Our variational model, despite not relying on learning-based strategies, still outperforms classical enhancement techniques and remains competitive with most deep learning approaches. 

Figure \ref{fig:UIEBcomparison} shows cropped regions of the enhanced results produced by each method for a sample from the UIEB test set. The most common issues include color distortions, as observed in HFM and P2CNet, and insufficient haze removal, as in Shallow-UWNet and AutoEnhancer. The most satisfactory results are obtained by UColor, RAUNE-Net, Phaseformer and our two proposed approaches. However, UColor and Phaseformer introduce excessive reddish tones, particularly noticeable at the bottom of the image. Furthermore, as shown in the zoomed-in region, fish and seabed details appear clearer with our unfolding method, while Mamba-UIE and RAUNE-Net still retain a slight greenish tone in the seabed background.

\begin{table}[t]\centering
\begin{tabular}{l |c | c | c }
   Loss function  & PSNR & SSIM & LPIPS \\
    \hline
    MSE & 23.49 & 0.8885 &	0.138 
 \\
     MSE + $\mathcal{L}_{\prox}$ & 23.87 &	0.8960 & 0.112
 \\
      MSE + $\mathcal{L}_{\prox}$ + LPIPS &  \bf{25.16} &  \bf{0.9140} &  \bf{0.089} 
 \\   
\end{tabular}
\caption{Quantitative comparison of the performance of the proposed unfolding method using different loss functions. Best values are highlighted in bold.}
\label{tab:abl_loss}\end{table}

\begin{table}[t]\centering
\begin{tabular}{l| c |c | c | c }
   Architecture & Auxiliary branches & PSNR & SSIM & LPIPS \\
    \hline
    MHA & \cmark & 23.72 & 0.9038 &	0.099 
 \\\hline
 \multirow{2}{*}{Mamba} & \xmark & 22.71  & 0.8940 & 0.106 \\
       & \cmark & \bf{25.16} &  \bf{0.9140} &  \bf{0.089} 
 \\   
\end{tabular}
\caption{Comparison of MHA and Mamba architectures with and without the proposed auxiliary branches. Best values are highlighted in bold.}
\label{tab:abl_arch}\end{table}

\begin{figure}[t]
\centering
\begin{tabular}{c@{\hskip 0.2em} c} 
\spyimageMHA{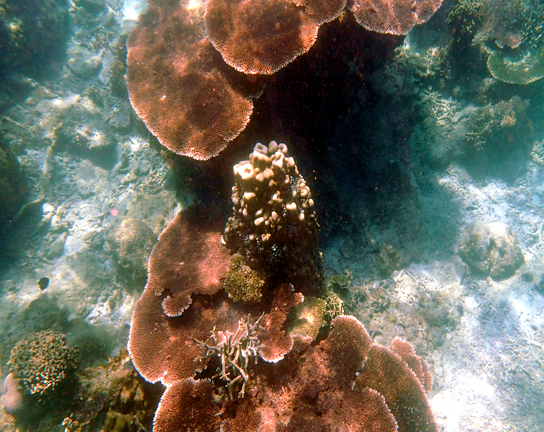}  &
    \spyimageMHA{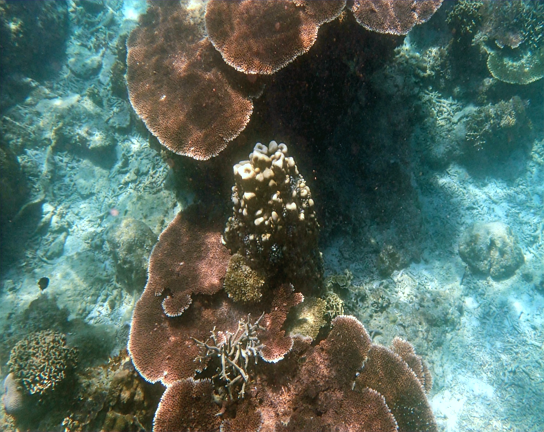}\\
    Ground Truth & MHA \\
    \spyimageMHA{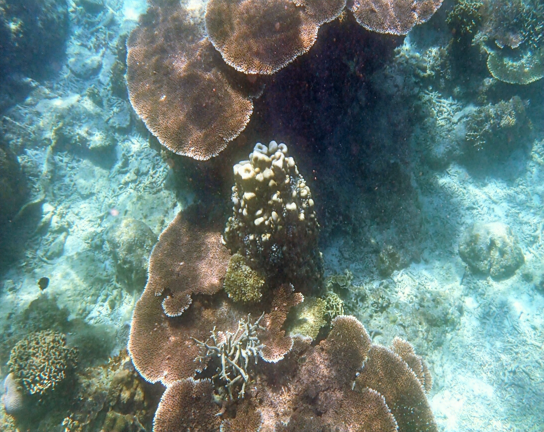} &
    \spyimageMHA{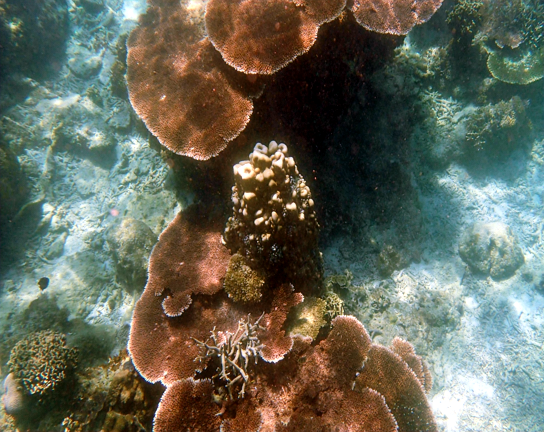} \\
     Mamba w/o auxiliary branches &  Mamba \\
\end{tabular}
\caption{Visual comparison between unfolding networks using Mamba layers, with and without auxiliary branches, and MHA. The full Mamba-based model produces results with higher contrast and more faithful color restoration.}
\label{fig:abl_arch_image}
\end{figure}

Table \ref{tab:SUIME_metrics} presents the average metrics on the SUIM-E test set. The proposed unfolding method ranks first across all evaluation metrics. Compared with classical methods, our variational model achieves better results, yet it is outperformed by recent deep learning techniques.

As illustrated in Figure \ref{fig:SUIMEcomparison}, classical approaches often produce unnatural colors in the water background. Although our variational method mitigates this issue, the resulting image exhibits a grayish tone. Several deep-learning techniques also retain greenish colors, whereas LiteEnhanceNet, Phaseformer, Mamba-UIE, and our unfolding method produce the best visual results. However, Phaseformer shows a predominance of the blue channel, while LiteEnhanceNet and Mamba-UIE generate a slightly grayish background compared with the ground-truth. In addition, these methods fail to fully recover the orange and white regions of the fish in the zoomed area.

\section{Ablation Study}
In this section, we conduct an ablation study to analyze the influence of the different terms in the variational model \eqref{energia} and to evaluate how the proposed components of the unfolding network affect both performance and computational efficiency.

We first investigate whether a dehazing-based formulation is more suitable than a Retinex-based one for underwater image enhancement. To this end, we apply the proposed energy using a Retinex decomposition based on the equivalence \eqref{retinex_dehazing}. As shown in Figure \ref{fig:abl_model_comparison}, the dehazing version consistently produces superior enhancement results, providing better visibility, contrast, and color accuracy. 



\begin{figure*}[t]
\centering
\includegraphics[width=0.85\linewidth]{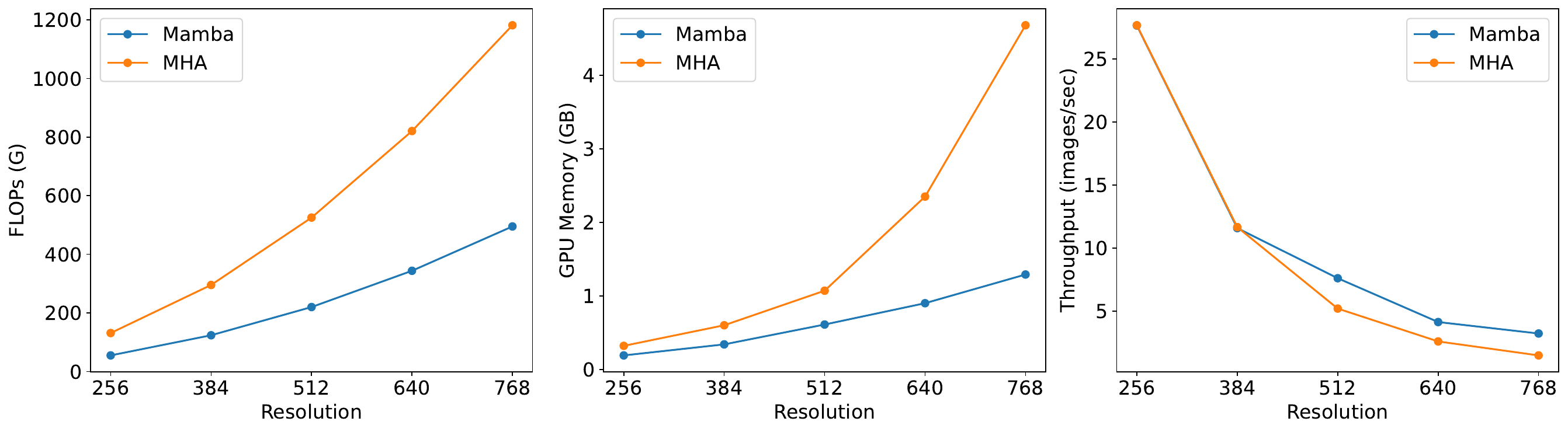}
\caption{Efficiency comparison between Mamba and MHA across different image resolutions. From left to right: FLOPs, GPU memory consumption, and inference throughput. The Mamba-based network achieves lower computational cost and memory usage, while providing higher throughput at larger image resolutions.}
\label{fig:efficiency_comparison}
\end{figure*}

We also evaluate the relevance of the nonlocal gradient-type constraint in Figure~\ref{fig:abl_grad_fidel} by comparing results obtained with and without it. Enforcing this constraint significantly enhances edge contrast, producing sharper details, 
and improves perceptual quality.

The $\rho$-term is mainly introduced in \eqref{energia} to ensure the convergence of the optimization process. Nevertheless, an accurate transmission map estimation is also essential for a correct performance. Several techniques have been proposed, with the Dark Channel Prior (DCP) \cite{he2010single} and its underwater variants, such as the Red Channel Prior (RCP) \cite{galdran2015automatic} and the Underwater Dark Channel Prior (UDCP) \cite{drews2013transmission}, being the most widely used. In our framework, the original DCP provides the most reliable transmission estimates. As shown in Figure \ref{fig:abl_t0_fidel}, UDCP tends to introduce halo artifacts and excessively darkens the water regions, whereas RCP only partially removes the haze present in the original image. These observations support the use of the classical DCP as the initial transmission estimate.

We next analyze the components of the unfolding network. As illustrated in Figure \ref{fig:abl_stages}, supervising the intermediate stages plays a crucial role. When only the MSE on the final output is used, the intermediate stages fail to produce meaningful reconstructions. In contrast, with the proposed proximal trajectory supervision, the intermediate outputs progressively evolve toward the reference image. This behavior leads to improved final results, both visually, particularly in the water regions of the example image, and quantitatively, as reported in Table \ref{tab:abl_loss}. In addition, we incorporate a small weighting of the LPIPS \cite{LPIPS} metric into the final loss function to further refine the perceptual quality of the enhanced images. The resulting gains are also reflected in the metrics reported in the same table.

Finally, we analyze performance and efficiency by comparing our Mamba-based network with a counterpart based on Multi-Head Attention (MHA) \cite{attention}. Both architectures are designed with a similar number of parameters to ensure a fair comparison. We also evaluate the impact of incorporating auxiliary branches. As shown in Table \ref{tab:abl_arch} and Figure \ref{fig:abl_arch_image}, adding auxiliary branches improves the performance of the Mamba-based model, which, in turn, consistently outperforms MHA across all metrics while producing visually superior results.

Beyond performance, efficiency is a key motivation for adopting Mamba instead of MHA for nonlocal modeling. Figure \ref{fig:efficiency_comparison} compares the computational cost at different image resolutions. The results show that both FLOPs and GPU memory consumption are consistently higher for MHA and increase rapidly with image resolution, particularly for memory usage. In terms of throughput, both approaches have similar execution times at low resolutions. However, as the image size increases, the Mamba-based network scales more efficiently and processes a larger number of images per second.

\section{Conclusions}


In this work, we have presented a deep unfolding framework for underwater image enhancement built upon a dehazing-based variational model. The formulation incorporates a multiplicative residual component to absorb remaining artifacts and a nonlocal gradient-type constraint to enhance edge sharpness, leading to improved restoration quality. In addition, we have provided a theoretical foundation for the proposed approach by proving the existence of a minimizer of the variational model. Building upon this formulation, we have derived a learnable deep unfolding architecture in which the proximal operators are replaced by residual neural networks and Mamba-based modules to efficiently model nonlocal interactions. Furthermore, a proximal trajectory loss is introduced to enforce consistency between the unfolding stages and the iterations of an ideal restoration regularizer, thereby preserving the interpretability of the underlying optimization process.

Experiments on different datasets have demonstrated that the proposed unfolding network improves visual quality and quantitative performance compared with recent state-of-the-art techniques. 
Ablation studies have further confirmed the importance of both the variational formulation and the proposed network components in achieving high-quality restoration.

Despite these promising results, several directions remain open for future research. The proposed framework still relies on an initial transmission estimate, and its performance could benefit from more accurate physically motivated priors or jointly learned initialization strategies. Furthermore, extending the model to video enhancement and temporal consistency, as well as exploring alternative physically grounded degradation models, constitutes an interesting direction for future work.

\bibliographystyle{IEEEtran}
\bibliography{refs}

\end{document}